\def\eqref#1{equation~\ref{#1}}
\def\1{\bm{1}}
\DeclareMathAlphabet{\mathsfit}{\encodingdefault}{\sfdefault}{m}{sl}
\SetMathAlphabet{\mathsfit}{bold}{\encodingdefault}{\sfdefault}{bx}{n}
\definecolor{myblue}{RGB}{206,230,247}
\theoremstyle{plain}
\newtheorem{theorem}{Theorem}[section]
\newtheorem{proposition}[theorem]{Proposition}
\newtheorem{lemma}[theorem]{Lemma}
\theoremstyle{definition}
\newtheorem{assumption}[theorem]{Assumption}
\theoremstyle{remark}
\newcommand{\tick}{\ding{51}}          
\newcommand{\cross}{\ding{55}}         
\title{Adaptive Scaling of Policy Constraints for Offline Reinforcement Learning}
\author{
Jing Tan \\
University of Science and Technology Beijing \\
Beijing, China \\
Hong Kong University of Science and Technology (Guangzhou) \\
Guangzhou, China \\
\And
Xiaorui Li, Chao Yao, Xiaojuan Ban \& 
Zhaolin Yuan\thanks{Corresponding authors} \\
University of Science and Technology Beijing \\
Beijing, China \\
\texttt{yuanzhaolin@ustb.edu.cn} \\
\And
Yuetong Fang \& 
Renjing Xu\footnotemark[1] \\
Hong Kong University of Science and Technology (Guangzhou) \\
Guangzhou, China \\
\texttt{renjingxu@hkust-gz.edu.cn}
}
\begin{document}

\maketitle

\begin{abstract}
Offline reinforcement learning (RL) enables learning effective policies from fixed datasets without any environment interaction. Existing methods typically employ policy constraints to mitigate the distribution shift encountered during offline RL training. However, because the scale of the constraints varies across tasks and datasets of differing quality, existing methods must meticulously tune hyperparameters to match each dataset, which is time-consuming and often impractical. To bridge this gap, we propose Adaptive Scaling of Policy Constraints (ASPC), a second-order differentiable framework that automatically adjusts the scale of policy constraints during training. We theoretically analyze its performance improvement guarantee. In experiments on 39 datasets across four D4RL domains, ASPC using a single hyperparameter configuration outperforms other adaptive constraint methods and state-of-the-art offline RL algorithms that require per-dataset tuning, achieving an average 35\% improvement in normalized performance over the baseline. Moreover, ASPC consistently yields additional gains when integrated with a variety of existing offline RL algorithms, demonstrating its broad generality.

\end{abstract}

\section{Introduction}
\label{sec:Introduction}
Offline reinforcement learning (RL) learns a policy exclusively from a fixed, pre-collected dataset without further interactions with the environment \cite{levine2020offline}. This characteristic is particularly crucial in real-world applications such as autonomous driving \cite{el2017deep, kendall2019learning}, healthcare \cite{prasad2017reinforcement, wang2018supervised}, industry \cite{zhan2022deepthermal, yuan2024controlling}, and other tasks, where interacting with the environment can be expensive and risky.

Despite the potential advantages, a critical challenge in offline RL is the distribution shift \cite{levine2020offline} between the offline data and the training policies, often leading to suboptimal or even invalid policy updates. Many methods have been proposed to mitigate the adverse effects of the distribution shift. A common strategy is to impose explicit or implicit policy constraints \cite{fujimoto2019off, kumar2020conservative, fujimoto2021minimalist, kostrikov2022offline}, ensuring that the learned policy remains close to the behavior policy used to collect the dataset. By imposing constraints on policy updates, these methods can effectively mitigate the extrapolation error of the Q value \cite{fujimoto2019off} induced by the distribution shift while offering certain performance guarantees.

A central but often overlooked issue in policy constraint methods is the choice of the constraint scale, which crucially governs the balance between the RL objective and the behavior cloning (BC) term.  
Existing approaches fall into two categories.  
First, methods that rely on dataset-specific hyperparameter tuning can achieve strong results, but their performance collapses once a single configuration is applied across tasks or datasets of varying quality, as shown in Figure~\ref{fig:performance_degrade}(b).  
Second, adaptive variants with fixed hyperparameters \cite{peng2023weighted,yang2024hundreds} alleviate tuning costs, yet they only reweight actions locally and neglect the global trade-off scale, leaving a significant gap to carefully tuned baselines.  
In practical offline RL, where extensive tuning is prohibitively expensive or even infeasible, the pressing challenge is how to achieve robust performance with a single hyperparameter configuration across diverse datasets.

\begin{figure}[htbp]
    \centering
    \includegraphics[width=0.50\linewidth]{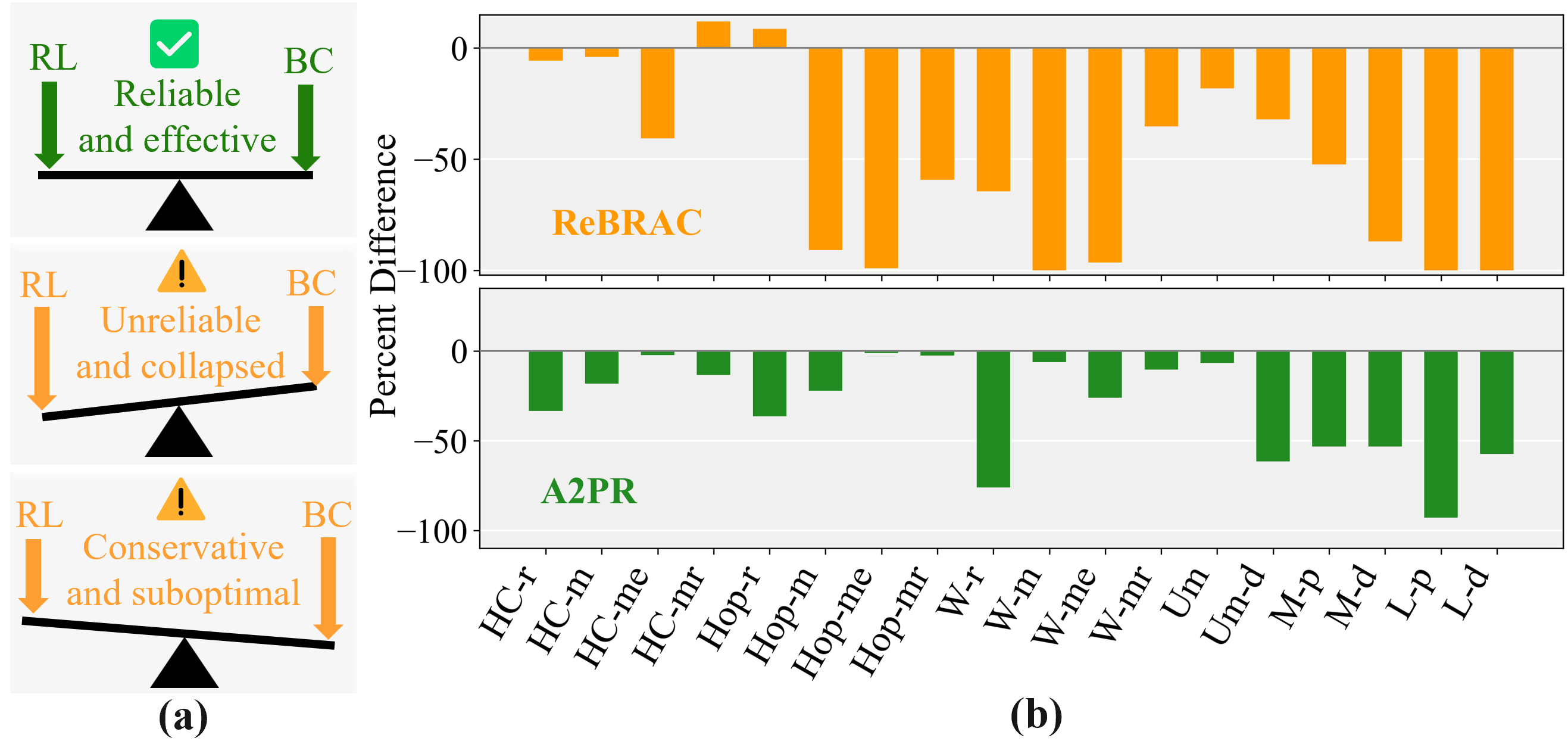}
    \caption{(a) The RL–BC trade-off in offline RL. ASPC dynamically balances RL and BC, yielding a reliable and effective policy (left). Existing methods fail to properly calibrate this trade-off, resulting in suboptimal or collapsed policies (middle and right).
    (b) Percent difference in performance for ReBRAC and A2PR under a single hyperparameter setting across all datasets. HC = HalfCheetah, Hop = Hopper, W = Walker, r = random, m = medium, mr = medium-replay, me = medium-expert, Um = umaze, M = medium, L = large, d = diverse, p = play. }
    \label{fig:performance_degrade}
\end{figure}

To enable a single hyperparameter configuration to match or exceed the performance of finely tuned methods across datasets of varying quality and tasks, we propose an adaptive scaling of policy constraints (ASPC) approach that dynamically adjusts the constraint scale during training. The intuition of this method is shown in Figure \ref{fig:performance_degrade}(a). Our approach leverages a second-order differentiable optimization framework \cite{finn2017model} to balance the goals of RL and BC. Specifically, we parameterize the scale factor \(\alpha\) as a learnable parameter that balances the RL objective \(\mathcal{L}_{\mathrm{RL}}\) and the BC objective \(\mathcal{L}_{\mathrm{BC}}\) in TD3+BC \cite{fujimoto2021minimalist}. The combined objective \(\mathcal{L}\) is given by
\begin{equation}
\label{eq:RLBC}
\mathcal{L} = \alpha\,\mathcal{L}_{\mathrm{RL}} + \mathcal{L}_{\mathrm{BC}},
\end{equation}
For the full definitions of \(\alpha\), refer to ~\eqref{eq:TD3+BC}. During training, $\alpha$ is dynamically adjusted by constraining the rate of change of the Q-value and the BC loss, enabling the algorithm to discover a more stable learning path and exhibit remarkable adaptability across tasks and datasets. 

We theoretically analyze the performance improvement guarantee of ASPC and extensively evaluate it on the D4RL benchmark \cite{levine2020offline}. Our empirical results demonstrate that ASPC outperforms other state-of-the-art offline RL algorithms that depend on meticulously tuned hyperparameters for each dataset, while adding only minimal computational overhead to the original TD3+BC backbone. In addition, ASPC improves a variety of offline RL algorithms beyond TD3+BC, further indicating its generality and broad applicability.
\vspace{-5pt}
\section{Related Works}
\subsection{Offline RL}
Offline RL aims to learn policies purely from static datasets and suffers from distribution shift between the behavior policy and the learned policy, leading to value overestimation and policy collapse.  
Existing approaches address this challenge from several perspectives.  
Policy constraint methods explicitly \cite{fujimoto2019off, fujimoto2021minimalist} or implicitly \cite{kumar2020conservative, kostrikov2022offline} regularize the learned policy toward the behavior distribution.  
Uncertainty-aware approaches penalize actions with high epistemic or aleatoric uncertainty \cite{an2021uncertainty, bai2022pessimistic, zhang2023uncertainty}.  
Sequence modeling methods reformulate RL as conditional trajectory modeling using transformers \cite{chen2021decision, janner2021offline}.
Among these, policy constraint methods have emerged as the most direct and widely adopted solution, but their effectiveness crucially depends on properly scaling the constraint.  
This motivates our focus on developing an adaptive scaling mechanism that eliminates the need for per-dataset tuning while retaining robustness across diverse offline RL benchmarks.


\subsection{Adaptive Policy Constraints}
Balancing the RL objective against BC is central to offline RL, and the strength of this constraint critically affects both stability and performance.  
Recent work has explored adaptive ways to tune this balance.  
Trajectory- or sample-weighting methods such as AW \cite{hong2023harnessing}, wPC \cite{peng2023weighted}, and OAP \cite{yang2023boosting} reweight transitions or actions based on estimated value or expert preference, thereby adjusting constraint strength locally.  
Other approaches introduce auxiliary models to refine constraint scaling, for example PRDC \cite{ran2023policy}, GORL \cite{yang2024hundreds}, A2PR \cite{liu2024adaptive}, and IEPC \cite{liu2024implicit}.  
Despite these advances, current approaches either rely on per-dataset hyperparameter tuning for optimal performance, or apply a fixed configuration that yields only limited gains across domains.  
Our ASPC method addresses this gap by dynamically adjusting the constraint scale during training, enabling robust performance across diverse datasets with a single hyperparameter configuration.


\section{Preliminaries}
\label{sec:Preliminaries}
RL problems are formulated as a Markov decision process (MDP), described by the tuple $(S, A$, $P, R, \gamma)$. The set of states is $S$, the set of actions is $A$, the transition probability function is $P(s'|s, a)$, the reward function is $R(s, a)$, and $\gamma \in [0, 1)$ is the discount factor. The objective is to find a policy $\pi: S \rightarrow A$ that maximizes the expected discounted return. This objective is equivalently expressed as maximizing the Q-value $Q^\pi(s, a)$ under $\pi$, given by:
\begin{equation}
Q^\pi(s, a) = \mathbb{E}_{\pi}\left[\sum_{t=0}^{\infty} \gamma^t R(s_t, a_t) \,\middle|\, s_0 = s, a_0 = a \right], 
\label{eq:Q-value}
\end{equation}
where $s_t$ and $a_t$ represent the state and action at time $t$. In practice, RL algorithms update Q-values using the Bellman equation as an iterative rule, seeking to converge to the optimal policy $\pi^*$.

A central challenge for offline RL is the distribution shift. When a state--action pair~$(s,a)$ lies outside the dataset~$\mathcal D$, directly optimizing the Q–function may cause severe over-estimation. One remedy is to constrain the target policy~$\pi$ to stay close to the behaviour policy~$\pi_\beta$.  TD3+BC \cite{fujimoto2021minimalist} does so by solving:
\begin{equation}
\pi=\arg\max_{\pi}\;
\mathbb{E}_{(s,a)\sim\mathcal D}\!
\Big[
\lambda\,\underbrace{Q(s,\pi(s))}_{\text{RL}}
-
\underbrace{(\pi(s)-a)^{2}}_{\text{BC}}
\Big],
\quad
\lambda=\frac{\alpha}{\tfrac1N\sum_{i}|Q(s_i,a_i)|}.
\label{eq:TD3+BC}
\end{equation}
normalizes the RL term to the scale of the BC loss. In vanilla TD3+BC, $\alpha$ is a fixed constant.
Instead of keeping the scale factor $\alpha$ static, we update it throughout training.

\section{Method}
\label{sec:method}
We now present the ASPC algorithm in detail. 
We begin by introducing its core framework, a second-order differentiable optimization that adaptively balances the RL and BC objectives (Section~\ref{subsec:ASPC}). 
We then provide a theoretical analysis (Section~\ref{subsec:theory}), which explains the role of the mutual constraint term and establishes single-step and long-term performance guarantees. 
Finally, we describe a practical instantiation of ASPC built on TD3+BC (Section~\ref{subsec:implement}), which enables its application to standard offline RL benchmarks.

\subsection{Adaptive Scaling of Policy Constraints}
\label{subsec:ASPC}
To adaptively adjust the relative scaling between the RL and BC objectives, ASPC adopts a meta-learning approach \cite{finn2017model, franceschi2018bilevel}. It converts the scale factor \(\alpha\) in~\eqref{eq:TD3+BC} into a learnable parameter and optimizes it dynamically via bilevel training, utilizing inner updates and outer updates to maximize RL exploration near the behavior policy.

\textbf{Inner Update}  
To optimize the policy under offline data, we define the inner objective as  
\begin{equation}
\label{eq:inner_loss}
\mathcal L_{\text{inner}}(\theta;\alpha)=
\mathbb{E}_{(s,a)\sim\mathcal D}\Bigl[
      -\lambda(\alpha)\,Q\bigl(s,\pi_\theta(s)\bigr)
      + \lVert\pi_\theta(s)-a\rVert^{2}
\Bigr],
\end{equation}
where $\lambda(\alpha)=\alpha \big/ \mathbb{E}_{s\sim \mathcal D}\!\left[|Q(s,\pi_\theta(s))|\right]$.  
The inner update is then obtained via a gradient descent step with learning rate $\eta_\theta$:  
\begin{equation}
\label{eq:inner_update}
\tilde\theta(\alpha)\;=\;\theta
\;-\;
\eta_\theta\,
\nabla_\theta \mathcal L_{\text{inner}}(\theta;\alpha),
\end{equation}
and $\tilde\theta(\alpha)$ denotes the updated policy parameters after one inner step.

\textbf{Outer Update}
While the inner update optimizes the policy parameters $\theta$ for a given scale $\alpha$,  
the outer update is responsible for adjusting $\alpha$ itself so as to dynamically balance the RL and BC objectives.  
The outer loss is composed of three coordinated components.
\textbf{\(\mathcal L_1\)} mirrors TD3\,+\,BC and steers \(\alpha\) toward a better balance between RL and BC.  
\textbf{\(\mathcal L_2\)} penalizes abrupt increases in the expected Q-value, while  
\textbf{\(\mathcal L_3\)} constrains large shifts in the BC loss.  
Together, \(\mathcal L_2\) and \(\mathcal L_3\) adaptively regulate the step prescribed by \(\mathcal L_1\), preventing either RL or BC from dominating and thereby stabilizing training.
Formally, we write:
\begin{align}
\mathcal L_1
&= -\alpha\;
   \frac{\mathbb{E}_{s\sim\mathcal D}\!\left[\,Q\!\left(s,\pi_{\tilde\theta}(s)\right)\right]}
        {\mathbb{E}_{s\sim\mathcal D}\!\left[\,\bigl|Q\!\left(s,\pi_{\tilde\theta}(s)\right)\bigr|\right]}
   \;+\;
   \mathbb{E}_{(s,a)\sim\mathcal D}\!\left[\left\|\pi_{\tilde\theta}(s)-a\right\|^{2}\right],
\label{eq:L1}\\
\mathcal L_2
&=\Bigl(
   \mathbb{E}_{s\sim\mathcal D}\!\left[\,Q\!\left(s,\pi_{\tilde\theta}(s)\right)\right]
   -
   \mathbb{E}_{s\sim\mathcal D}\!\left[\,Q\!\left(s,\pi_{\theta}(s)\right)\right]
  \Bigr)^{2},
\label{eq:L2}\\
\mathcal L_3
&= \big(\mathcal L_2\mathcal .detach \big)\;
   \Bigl(\sup_{(s,a)\in\mathcal D}\|\pi_{\theta}(s)-a\|^{2}\Bigr)\;
   \Bigl(\sup_{(s,a)\in\mathcal D}
     \big|
       \|\pi_{\tilde\theta}(s)-a\|^{2}
       -\|\pi_{\theta}(s)-a\|^{2}
     \big|
   \Bigr),
\label{eq:L3}
\end{align}
The outer objective is
\begin{equation}
\label{eq:outer_loss}
\mathcal L_{\mathrm{outer}}\!\bigl(\tilde\theta(\alpha)\bigr)
= \mathcal L_1+\mathcal L_2+\mathcal L_3.
\end{equation}
Here, \(\pi_{\theta}\) and \(\pi_{\tilde\theta}\) denote the policies before and after the inner update, respectively. $\mathcal .detach$ indicates stopping gradients. 
While $\mathcal L_1$ and $\mathcal L_2$ are relatively standard, the design of $\mathcal L_3$ requires clarification. 
Theoretically, its form follows directly from Theorem~\ref{thm:single-step-alg1}, with details in Appendix~\ref{para:A3}. 
Intuitively, $\mathcal L_3$ combines three factors: the rate of change in Q-values, the upper bound of the BC loss, and the variation in BC loss across iterations. 
Large Q-value fluctuations or a high BC-loss bound signal rapid policy change or significant deviation from the behavior policy. 
In such cases, strengthening the penalty on BC variation helps suppress distributional shift and stabilize training, consistent with our intuition.
To update $\alpha$, we treat the inner update parameters $\tilde\theta(\alpha)$ as an implicit function of $\alpha$ and use second-order derivatives.  
Lets $\eta_\alpha$ be the learning rate of $\alpha$. The gradient-descent step is
\begin{equation}
\label{eq:alpha-update}
\alpha \leftarrow \alpha
-\eta_\alpha\!
\biggl(
      \frac{\partial \mathcal L_{\mathrm{outer}}\!\bigl(\tilde\theta(\alpha))}{\partial\tilde\theta}\,
       \frac{\partial \tilde\theta(\alpha)}{\partial \alpha}
\biggr),
\end{equation}
\vspace{-10pt}

\subsection{Theoretical Analysis}
\label{subsec:theory}
We now analyze the theoretical properties of ASPC. 
We show that the outer objective ensures stable updates and reduces the gap to the optimal policy.

\begin{assumption}
\label{as:lipschitz}
The critic \(Q(s,a)\) and the transition kernel \(P(\cdot \mid s,a)\) are Lipschitz continuous with respect to the action variable.  
That is, there exist constants \(L_Q, L_P > 0\), independent of \(s\), such that for all \(s \in \mathcal S\) and all \(a_1,a_2 \in \mathcal A\),
\begin{align}
\|Q(s,a_1) - Q(s,a_2)\| 
&\le L_Q \|a_1 - a_2\|, \label{eq:lipschitz-q}
\|P(\cdot \mid s,a_1) - P(\cdot \mid s,a_2)\|_{\mathrm{TV}}
\le L_P \|a_1 - a_2\|.
\end{align}
\end{assumption}

\begin{proposition}[Mutual constraints between $\Delta L_{BC}$ and $(\Delta Q)^2$]
\label{prop:deltaL-deltaQ}
Under Assumption~\ref{as:lipschitz}, the change in BC loss ($\Delta L_{BC}$) and the squared change in Q-values ($( \Delta Q )^2$) mutually constrain each other: 
$(\Delta Q)^2$ provides a lower bound on $\Delta L_{BC}$, while $\Delta L_{BC}$ provides an upper bound on $(\Delta Q)^2$.
\end{proposition}
This result shows that the two penalties in \eqref{eq:L2} and \eqref{eq:L3} are inherently coupled rather than independent. 
It explains why in practice some tasks succeed with only one of them, while others require both for stable training (see Section~\ref{subsec:ablation}). The detailed proof is provided in Appendix \ref{para:A1}.
\begin{proposition}[Single-step performance lower bound]\label{prop:one-step-lower} 
For the update step from $\pi_t$ to $\pi_{t+1}$,   
the performance improvement admits the following lower bound:
\begin{align}
J(\pi_{t+1}) - J(\pi_t)
\;\ge\;
\frac{1}{1-\gamma}
\Bigl(
\Delta Q - \Phi(\Delta L_{\infty}^{BC}, c_\infty^2)
\Bigr),
\end{align}
where $\Phi(\Delta L_{\infty}^{BC}, c_\infty^2)$ is a nonnegative function depending on the BC-loss variation upper bound $\Delta L_{\infty}^{BC}$ and the BC-loss upper bound $c_\infty^2$.
\end{proposition}

This proposition serves as the theoretical basis for Theorem~\ref{thm:single-step-alg1}.  
It also directly motivates the design of the penalty term $\mathcal L_3$ (\eqref{eq:L3}), whose form is derived from bounding $\Phi(\Delta L_{\infty}^{BC}, c_\infty^2)$.  
The detailed derivation of $\Phi$ is deferred to Appendix~\ref{para:A2}.

\vspace{-2pt}
\begin{theorem}[Single-step performance condition for ASPC]
\label{thm:single-step-alg1}
An idealized ASPC update that satisfies the condition 
$\Delta Q \ge \Phi$ leads to a non-decreasing policy performance:
$
J(\pi_{t+1}) - J(\pi_t) \ge 0.
$
\end{theorem}
ASPC employs a smooth relaxation of this condition via the 
outer objective, which is designed to guide updates toward this 
provably stable regime. The detailed proof is given in Appendix~\ref{para:A3}.

\begin{theorem}[Performance gap to optimal]\label{thm:gap-compact}
With Theorem~\ref{thm:single-step-alg1}, after \(T\) iterations when the single-step gain vanishes (\(\delta_T=0\)), the gap to the optimal policy satisfies:
\begin{equation}
J(\pi^{*}) - J(\pi_T)
\;\le\;
\Psi(\varepsilon_\beta) - T\,\delta_{\min},
\end{equation}
where \(\Psi(\varepsilon_\beta)\) is a function of the mismatch \(\varepsilon_\beta\) between the behavior policy and the optimal policy, and \(\delta_{\min}\) denotes the minimal single-step improvement before convergence. 
\end{theorem}
This theorem shows that ASPC progressively reduces the suboptimality gap until convergence, where the remaining gap is controlled by \(\Psi(\varepsilon_\beta)\). The full derivation of \(\Psi(\varepsilon_\beta)\) is given in Appendix~\ref{para:A4}.

\begin{algorithm}[htbp]
  \caption{Adaptive Scaling of Policy Constraints}\label{alg:ASPC}
  \textbf{Initialize:} critic and actor networks,
  scale factor $\alpha$, replay buffer $\mathcal D$, update intervals
  $k_\pi,\; k_\alpha$.
  \begin{algorithmic}[1]
    \For{$i=1$ to $N$}
      \State \textbf{Critic update:}
      \State Sample minibatch from $D$; Compute TD targets and update critic networks;
      \If{$i\bmod k_\pi=0$}
        \State \textbf{Actor update (inner):}
        \State Compute $\mathcal L_{\mathrm{inner}}(\theta;\alpha)$ by ~\eqref{eq:inner_loss}; Compute $\tilde\theta(\alpha)$ by ~\eqref{eq:inner_update}; 
        \State Update actor networks;
        \textcolor{blue}{\State \textbf{if} $i\bmod(k_\pi\!\cdot\!k_\alpha)=0$ \textbf{then}}
          \textcolor{blue}{\State \textbf{$\boldsymbol{\alpha}$ update (outer):}}
          \textcolor{blue}{\State Compute $\mathcal L_{\mathrm{outer}}\!\bigl(\tilde\theta(\alpha))$ by ~\eqref{eq:outer_loss};} 
          \textcolor{blue}{Update $\alpha$ via ~\eqref{eq:alpha-update};}
        \textcolor{blue}{\State \textbf{end if}}
        \State Soft update critic and actor networks;
      \EndIf
    \EndFor
  \end{algorithmic}
\end{algorithm}

\subsection{Implementation on TD3+BC}
\label{subsec:implement}

To make ASPC practical, we instantiate it on top of the TD3+BC backbone with only two modifications:  
(i) a redesigned critic network, and  
(ii) a learnable scale factor~$\alpha$.  
All other network components and hyperparameters remain unchanged. See Appendix~\ref{para:hyperparameters} for a full specification.

Recent studies show that deeper critics \cite{kumar2022offline,lee2022multi} and the insertion of LayerNorm between layers \cite{nikulin2023anti,ball2023efficient,tarasov2024revisiting} can mitigate Q-value over-estimation and improve stability. Following this evidence, we extend the TD3+BC critic from two to three hidden layers and insert a LayerNorm after each layer.  An ablation of this choice is provided in Section~\ref{para:robust_critic}.

Algorithm~\ref{alg:ASPC} lists the ASPC procedure. Blue highlights indicate lines that differ from the TD3+BC backbone.  Although second-order gradients increase cost, we set the $\alpha$-update interval $k_{\alpha}$ far longer than the actor-update interval $k_{\pi}$, which maintains performance while sharply reducing runtime.  Section~\ref{subsec:runtime} analyses this trade-off in detail.

\begin{table*}[!htb]
    \centering
    \caption{Average normalized score over the final evaluation across four random seeds. The best performance in each dataset is highlighted in \textbf{bold}, while the second-best performance is indicated with an \underline{underline}. Blue shading indicates methods with top domain average performance. The symbol $\pm$ denotes the standard deviation. \tick denotes fixed hyperparameters, whereas \cross denotes dataset-specific ones. \textsuperscript{*}To ensure fairness, TD3+BC and wPC employ the robust critic described in Section \ref{para:robust_critic}.}
    \label{tab:performance_comparison}

    \resizebox{0.95\linewidth}{!}{%
    \begin{tabular}{l l| c c c c c | c}
\toprule
 & \textbf{Task Name} 
 & \textbf{TD3+BC\textsuperscript{*}(\tick)} 
 & \textbf{A2PR(\tick)} 
 & \textbf{IQL(\cross)} 
 & \textbf{wPC\textsuperscript{*}(\tick)} 
 & \textbf{ReBRAC(\cross)} 
 & \textbf{ASPC (Ours)(\tick)} \\
\midrule

\multirow{6}{*}{\rotatebox{90}{HalfCheetah}} 
 & Random        
        & 10.6 $\pm$ 0.7 
        & \underline{21.1} $\pm$ 0.8  
        & 19.5 $\pm$ 0.8  
        & 18.8 $\pm$ 0.7  
        & \textbf{29.5} $\pm$ 1.5  
        & 20.8 $\pm$ 0.9 \\

 & Medium        
        & 49.6 $\pm$ 0.2           
        & 56.1 $\pm$ 0.3  
        & 50.0 $\pm$ 0.2  
        & 54.8 $\pm$ 0.2  
        & \textbf{65.6} $\pm$ 1.0  
        & \underline{58.7} $\pm$ 0.4 \\

 & Expert        
        & 100.4 $\pm$ 0.4           
        & 99.9 $\pm$ 3.2  
        & 95.5 $\pm$ 2.1  
        & 103.8 $\pm$ 2.4 
        & \textbf{105.9} $\pm$ 1.7 
        & \underline{105.1} $\pm$ 1.2 \\

 & Medium-Expert 
        & 97.9 $\pm$ 1.6           
        & 95.9 $\pm$ 6.0  
        & 92.7 $\pm$ 2.8  
        & 98.9 $\pm$ 8.5  
        & \textbf{101.1} $\pm$ 5.2 
        & \underline{99.9} $\pm$ 1.2 \\

 & Medium-Replay 
        & 45.8 $\pm$ 0.2           
        & 49.0 $\pm$ 0.4  
        & 42.1 $\pm$ 3.6  
        & 48.1 $\pm$ 0.2  
        & \textbf{51.0} $\pm$ 0.8  
        & \underline{50.6} $\pm$ 0.5 \\

 & Full-Replay   
        & 74.5 $\pm$ 1.6           
        & \underline{79.5} $\pm$ 1.5  
        & 75.0 $\pm$ 0.7  
        & 76.7 $\pm$ 2.3  
        & \textbf{82.1} $\pm$ 1.1  
        & 79.3 $\pm$ 0.9 \\
\midrule
\multirow{6}{*}{\rotatebox{90}{Hopper}} 
 & Random        
        & 8.6 $\pm$ 0.2              
        & \textbf{20.1} $\pm$ 11.6 
        & \underline{10.1} $\pm$ 5.9  
        & 8.5 $\pm$ 1.4   
        & 8.1 $\pm$ 2.4    
        & 9.4 $\pm$ 1.5 \\

 & Medium        
        & 62.0 $\pm$ 3.0             
        & 78.3 $\pm$ 4.4  
        & 65.2 $\pm$ 4.2  
        & 81.8 $\pm$ 9.8 
        & \textbf{102.0} $\pm$ 1.0 
        & \underline{92.7} $\pm$ 7.2 \\

 & Expert        
        & 108.2 $\pm$ 4.2             
        & 83.9 $\pm$ 6.0  
        & \underline{108.8} $\pm$ 3.1 
        & 79.1 $\pm$ 26.6 
        & 100.1 $\pm$ 8.3 
        & \textbf{112.3} $\pm$ 0.4 \\

 & Medium-Expert 
        & 103.3 $\pm$ 9.2            
        & \underline{110.8} $\pm$ 2.6 
        & 85.5 $\pm$ 29.7 
        & 109.1 $\pm$ 4.5 
        & 107.0 $\pm$ 6.4 
        & \textbf{111.0} $\pm$ 2.1 \\

 & Medium-Replay 
        & 47.4 $\pm$ 35.4            
        & 98.9 $\pm$ 2.0  
        & 89.6 $\pm$ 13.2 
        & \underline{100.8} $\pm$ 0.7 
        & 98.1 $\pm$ 5.3  
        & \textbf{101.3} $\pm$ 0.6 \\

 & Full-Replay   
        & 90.3 $\pm$ 22.9            
        & 97.1 $\pm$ 17.8 
        & 104.4 $\pm$ 10.8 
        & 105.6 $\pm$ 0.6 
        & \underline{107.1} $\pm$ 0.4 
        & \textbf{107.2} $\pm$ 0.5 \\
\midrule
\multirow{6}{*}{\rotatebox{90}{Walker2d}} 
 & Random        
        & 5.9 $\pm$ 3.5              
        & 1.2 $\pm$ 1.5   
        & 11.3 $\pm$ 7.0  
        & 12.5 $\pm$ 10.6 
        & \textbf{18.4} $\pm$ 4.5 
        & \underline{15.6} $\pm$ 6.4 \\

 & Medium        
        & 62.0 $\pm$ 3.0             
        & 84.2 $\pm$ 4.7  
        & 80.7 $\pm$ 3.4  
        & \underline{89.6} $\pm$ 0.3  
        & 82.5 $\pm$ 3.6    
        & \textbf{92.4} $\pm$ 5.4 \\

 & Expert        
        & 108.2 $\pm$ 4.2            
        & 84.8 $\pm$ 49.0 
        & 96.9 $\pm$ 32.3 
        & \underline{111.5} $\pm$ 0.1 
        & \textbf{112.3} $\pm$ 0.2  
        & 110.8 $\pm$ 0.1 \\

 & Medium-Expert 
        & 103.3 $\pm$ 9.2            
        & 88.2 $\pm$ 40.7 
        & \textbf{112.1} $\pm$ 0.5 
        & 110.1 $\pm$ 0.5 
        & \underline{111.6} $\pm$ 0.3 
        & 111.1 $\pm$ 0.3 \\

 & Medium-Replay 
        & 76.6 $\pm$ 12.7           
        & 84.5 $\pm$ 12.3 
        & 75.4 $\pm$ 9.3  
        & \underline{93.4} $\pm$ 3.0  
        & 77.3 $\pm$ 7.9    
        & \textbf{97.6} $\pm$ 0.5 \\

 & Full-Replay   
        & 88.3 $\pm$ 11.7            
        & \textbf{102.5} $\pm$ 0.0  
        & 97.5 $\pm$ 1.4  
        & 99.5 $\pm$ 0.5   
        & \underline{102.2} $\pm$ 1.7 
        & 102.1 $\pm$ 0.2 \\
\midrule

 & \textbf{MuJoCo Avg} 
                  & 70.7 
                  & 74.2 
                  & 72.9 
                  & 77.8
                  & \fcolorbox{myblue}{myblue}{81.2} 
                  & \fcolorbox{myblue}{myblue}{82.1} \\
\midrule

\multirow{3}{*}{\rotatebox{90}{Maze2d}}
 & Umaze   & 34.5 $\pm$ 13.9   & 102.5 $\pm$ 6.3   & -8.9 $\pm$ 6.1   & 73.1 $\pm$ 13.8  & \underline{106.8} $\pm$ 22.1 & \textbf{128.1} $\pm$ 31.8 \\
 & Medium  & 63.3 $\pm$ 63.3   & 90.4 $\pm$ 29.6   & 34.8 $\pm$ 2.7   & 87.4 $\pm$ 48.7  & \underline{105.1} $\pm$ 31.6 & \textbf{117.8} $\pm$ 17.3 \\
 & Large   & 108.9 $\pm$ 43.6  & \underline{177.7} $\pm$ 34.2  & 61.7 $\pm$ 3.5   & 123.3 $\pm$ 70.5 & 78.3 $\pm$ 61.7   & \textbf{195.8} $\pm$ 31.3 \\
\midrule

 & \textbf{Maze2d Avg} 
                  & 68.9 
                  & \fcolorbox{myblue}{myblue}{123.53}
                  & 46.2 
                  & 94.6 
                  & 96.7
                  & \fcolorbox{myblue}{myblue}{147.2} \\
\midrule

\multirow{6}{*}{\rotatebox{90}{AntMaze}}
 & Umaze          
        & \textbf{100.0} $\pm$ 0.0    
        & 92.5 $\pm$ 8.3    
        & 83.3 $\pm$ 4.5   
        & 97.5 $\pm$ 5.0   
        & \underline{97.8} $\pm$ 1.0  
        & 92.5 $\pm$ 5.0 \\

 & Umaze-Diverse  
        & 87.5 $\pm$ 12.5   
        & 32.5 $\pm$ 34.9   
        & 70.6 $\pm$ 3.7   
        & 75.0 $\pm$ 20.8  
        & \underline{88.3} $\pm$ 13.0 
        & \textbf{92.5} $\pm$ 9.5 \\

 & Medium-Play    
        & 7.5 $\pm$ 9.5    
        & 40.0 $\pm$ 7.1    
        & 64.6 $\pm$ 4.9   
        & \textbf{85.0} $\pm$ 5.7   
        & \underline{84.0} $\pm$ 4.2  
        & \textbf{85.0} $\pm$ 12.9 \\

 & Medium-Diverse 
        & 12.5 $\pm$ 12.5   
        & 40.0 $\pm$ 25.5   
        & 61.7 $\pm$ 6.1   
        & \textbf{85.0} $\pm$ 12.9  
        & \underline{76.3} $\pm$ 13.5 
        & 70.0 $\pm$ 11.5 \\

 & Large-Play     
        & 2.5 $\pm$ 5.0     
        & 5.0 $\pm$ 8.7     
        & 42.5 $\pm$ 6.5   
        & \textbf{65.0} $\pm$ 19.1  
        & \underline{60.4} $\pm$ 26.1 
        & 55.0 $\pm$ 5.7 \\

 & Large-Diverse  
        & 2.5 $\pm$ 5.0     
        & 22.5 $\pm$ 14.8   
        & 27.6 $\pm$ 7.8   
        & \textbf{65.0} $\pm$ 10.0  
        & \underline{54.4} $\pm$ 25.1 
        & 52.5 $\pm$ 18.9 \\
\midrule

 & \textbf{AntMaze Avg} 
                  & 35.4 
                  & 38.75 
                  & 58.3 
                  & \fcolorbox{myblue}{myblue}{78.7} 
                  & \fcolorbox{myblue}{myblue}{76.8} 
                  & \fcolorbox{myblue}{myblue}{74.5} \\
\midrule

\multirow{3}{*}{\rotatebox{90}{Pen}}
 & Human   & 53.8 $\pm$ 15.7 
           & -2.1 $\pm$ 0.0   
           & \underline{81.5} $\pm$ 17.5  
           & 39.9 $\pm$ 12.8  
           & \textbf{103.5} $\pm$ 14.1 
           & 81.1 $\pm$ 8.1 \\

 & Cloned  & 71.7 $\pm$ 21.5            
           & 6.5 $\pm$ 6.0    
           & 77.2 $\pm$ 17.7 
           & 34.6 $\pm$ 11.3  
           & \textbf{91.8} $\pm$ 21.7  
           & \underline{87.2} $\pm$ 4.2 \\

 & Expert  & 126.6 $\pm$ 24.8 
           & 51.5 $\pm$ 38.4  
           & 133.6 $\pm$ 16.0 
           & \underline{141.8} $\pm$ 11.8 
           & \textbf{154.1} $\pm$ 5.4  
           & 141.2 $\pm$ 9.4 \\
\midrule
\multirow{3}{*}{\rotatebox{90}{Door}}
 & Human   & 0.0 $\pm$ 0.0            
           & -0.2 $\pm$ 0.0   
           & \textbf{3.1} $\pm$ 2.0   
           & -0.2 $\pm$ 0.0   
           & \underline{0.0} $\pm$ 0.0    
           & 0.0 $\pm$ 0.0 \\

 & Cloned  & 0.0 $\pm$ 0.0             
           & -0.3 $\pm$ 0.0   
           & \underline{0.8} $\pm$ 1.0 
           & 0.0 $\pm$ 0.0    
           & \textbf{1.1} $\pm$ 2.6   
           & 0.0 $\pm$ 0.0 \\

 & Expert  & 81.6 $\pm$ 16.3           
           & -0.3 $\pm$ 0.0   
           & \underline{105.3} $\pm$ 2.8 
           & 51.4 $\pm$ 55.3 
           & 104.6 $\pm$ 2.4 
           & \textbf{105.6} $\pm$ 0.4 \\
\midrule
\multirow{3}{*}{\rotatebox{90}{Hammer}}
 & Human    & 0.0 $\pm$ 0.0             
           & 1.1 $\pm$ 0.4    
           & \textbf{2.5} $\pm$ 1.9   
           & 0.0 $\pm$ 0.1 
           & 0.2 $\pm$ 0.2    
           & \underline{2.2} $\pm$ 3.2 \\

 & Cloned   & 0.1 $\pm$ 0.0             
           & 0.3 $\pm$ 0.0    
           & 1.1 $\pm$ 0.5   
           & 0.1 $\pm$ 0.1    
           & \underline{6.7} $\pm$ 3.7  
           & \textbf{12.0} $\pm$ 9.1 \\

 & Expert   & \underline{132.8} $\pm$ 0.4         
           & 0.3 $\pm$ 0.1    
           & 129.6 $\pm$ 71.5  
           & 57.6 $\pm$ 0.1    
           & \textbf{133.8} $\pm$ 0.7    
           & 128.6 $\pm$ 0.4 \\
\midrule
\multirow{3}{*}{\rotatebox{90}{Relocate}}
 & Human    & 0.0 $\pm$ 0.0           
           & -0.3 $\pm$ 0.0  
           & \underline{0.1} $\pm$ 0.1   
           & 0.1 $\pm$ 0.0   
           & 0.0 $\pm$ 0.0    
           & \textbf{0.1} $\pm$ 0.2 \\

 & Cloned   & 0.0 $\pm$ 0.0           
           & -0.3 $\pm$ 0.0  
           & \underline{0.2} $\pm$ 0.4  
           & 0.1 $\pm$ 0.0    
           & \textbf{0.9} $\pm$ 1.6    
           & 0.0 $\pm$ 0.0 \\

 & Expert   & 90.6 $\pm$ 18.2 
           & -0.3 $\pm$ 0.0  
           & 106.5 $\pm$ 2.5   
           & 6.7 $\pm$ 4.6    
           & \underline{106.6} $\pm$ 3.2    
           & \textbf{111.2} $\pm$ 2.4 \\
\midrule

 & \textbf{Adroit Avg} 
                  & 46.4 
                  & 4.65 
                  & \fcolorbox{myblue}{myblue}{53.4} 
                  & 28.8
                  & \fcolorbox{myblue}{myblue}{58.6} 
                  & \fcolorbox{myblue}{myblue}{55.7} \\
\midrule

 & \textbf{Total Avg} 
                  & 57.7 
                  & 51.2 
                  & 62.6 
                  & 64.2 
                  & \fcolorbox{myblue}{myblue}{74.8} 
                  & \fcolorbox{myblue}{myblue}{77.9} \\
\bottomrule
\end{tabular}
    }
\end{table*}

\begin{figure*}[!htb]
  \centering
    \includegraphics[width=0.7\linewidth]{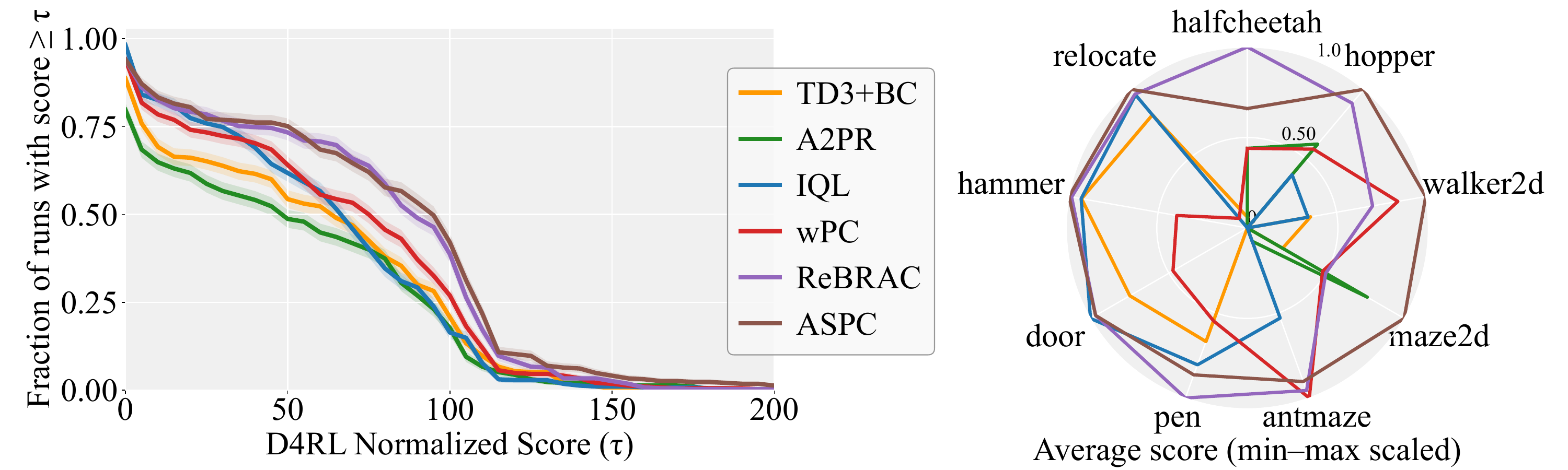}
  \caption{Left: performance profiles on 39 datasets of D4RL. Right: radar chart of the mean performance across the nine tasks.}
  \label{fig:radar_cdf}
\end{figure*}

\section{Experiments}
\label{sec:experiments}
In this section we evaluate ASPC on the D4RL benchmark.  
Section~\ref{subsec:baseline} compares ASPC with strong baselines to demonstrate its adaptability and overall effectiveness.  
Section~\ref{subsec:adaptability} analyzes the learning curves of $\alpha$ during training, further illustrating ASPC’s adaptive behaviour.  
Section~\ref{subsec:necessity} investigates the necessity of dynamically adjusting $\alpha$.  
Section~\ref{subsec:runtime} reports runtime results to highlight the efficiency of ASPC.  
Section~\ref{subsec:ablation} presents ablation studies on the key components of ASPC.
Section~\ref{subsec:new_method} provides results on integrating the ideas of ASPC with other methods, and Section~\ref{subsec:new_dataset} presents the performance of ASPC on the OGBench benchmark.
\subsection{Comparative Performance on Benchmark}
\label{subsec:baseline}
We evaluate ASPC on 39 datasets spanning four D4RL domains \cite{levine2020offline}: MuJoCo (v2), AntMaze (v2), Maze2d (v1), and Adroit (v1). Our baselines include TD3+BC \cite{fujimoto2021minimalist} and IQL \cite{kostrikov2022offline} as standard policy-constraint methods. wPC \cite{peng2023weighted} and A2PR \cite{liu2024adaptive} are state-of-the-art (SOTA) adaptive policy constraint methods built on TD3+BC. ReBRAC \cite{tarasov2024revisiting} integrates multiple performance-boosting components into TD3+BC and has achieved SOTA results across a wide range of datasets. TD3+BC, wPC, A2PR, and ASPC are all set as the single hyperparameter set, whereas IQL and ReBRAC rely on dataset-specific hyperparameters found via grid search. We reproduce results for TD3+BC, wPC and A2PR. IQL and ReBRAC results are taken from \cite{tarasov2024revisiting,tarasov2024corl}. Complete experimental details for each algorithm are provided in the appendix \ref{para:hyperparameters}.

The performance comparison is summarized in Table~\ref{tab:performance_comparison}. ASPC achieves the best performance on MuJoCo and Maze2d, and exhibits competitive results on Adroit and AntMaze. Most notably, ASPC attains SOTA performance on average across all four domains, which not only outperforms other adaptive policy constraint methods but also surpasses approaches that rely on meticulous per-dataset hyperparameter tuning, highlighting its remarkable adaptability. Figure \ref{fig:radar_cdf} shows that the performance profile curves (left) place ASPC above all baselines for almost every threshold, and the min-max-scaled radar chart (right) gives ASPC the largest, most balanced polygon, visually confirming its strong and stable performance across tasks without per-dataset tuning.

\begin{figure*}[htbp]
  \centering
    \includegraphics[width=1.0\linewidth]{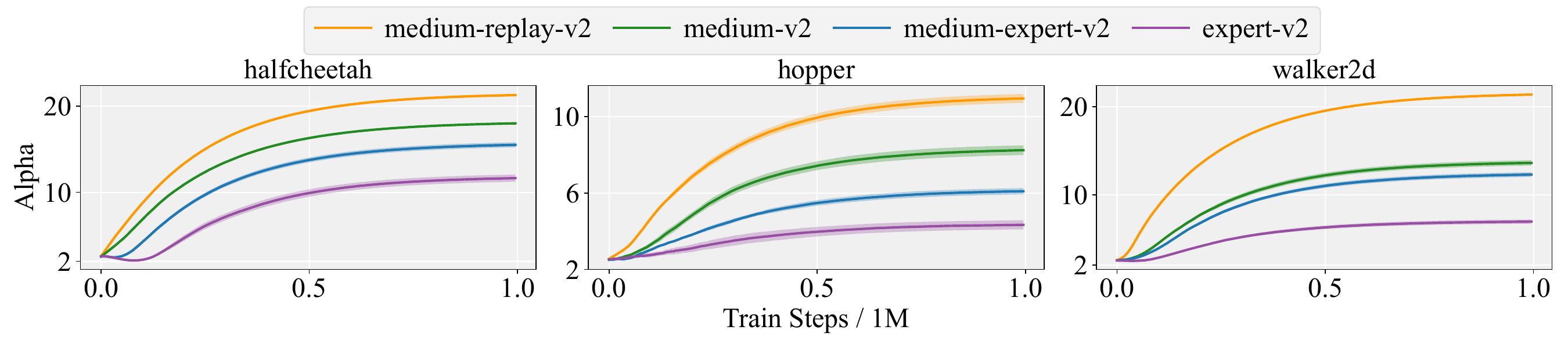}
  \caption{Learning curves of $\alpha$ on halfcheetah, hopper, and walker2d across datasets of different quality. Higher-quality datasets yield smaller $\alpha$ (favoring BC), while lower-quality ones yield larger $\alpha$ (favoring RL). $\alpha$ is initialized to 2.5.}
  \label{fig:alpha_curve_data}
\end{figure*}

\subsection{Adaptability of the Sacle Factor}
\label{subsec:adaptability}


\textbf{Dataset Adaptability}
Figure \ref{fig:alpha_curve_data} shows the evolution of $\alpha$ on HalfCheetah, Hopper, and Walker2d for four dataset quality levels, listed from highest to lowest as expert, medium-expert, medium, and medium-replay.
Across all three tasks, higher-quality datasets lead to smaller $\alpha$, which places more weight on BC, whereas lower-quality datasets lead to larger $\alpha$, shifting the emphasis toward RL. The consistent ordering confirms that ASPC automatically adjusts the policy-constraint scale to dataset quality without any per-dataset hyperparameter tuning.

\begin{figure}[htbp]
  \centering
    \includegraphics[width=0.6\linewidth]{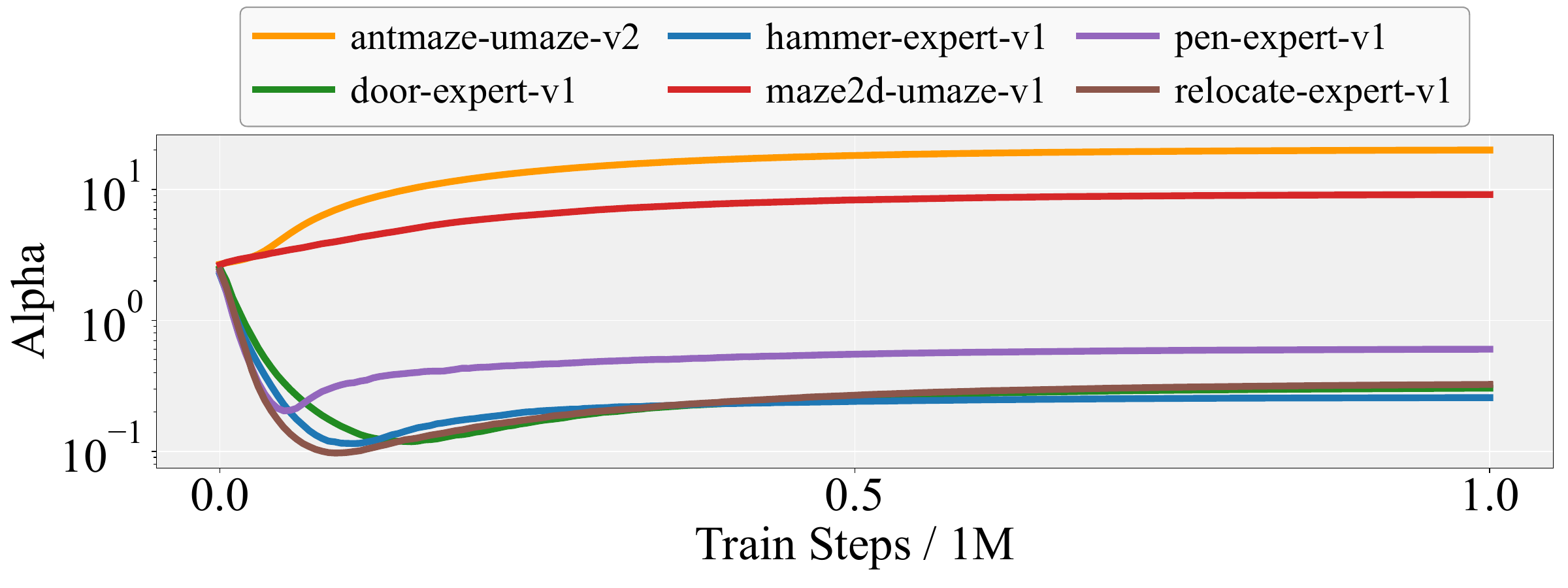}
  \caption{Learning curves of $\alpha$ on six different tasks. The algorithm automatically adjusts $\alpha$ based on the task characteristics. The y-axis is shown in logarithmic scale for better visualization.}
  \label{fig:alpha_curve_extra}
\end{figure}

\textbf{Task Adaptability}
Figure \ref{fig:alpha_curve_extra} plots the $\alpha$ trajectories on six heterogeneous tasks. Tasks such as door, pen, hammer, and relocate possess narrow expert data distributions; here $\alpha$ settles near $10^{-1}$, giving greater weight to BC. Conversely, antmaze and maze2d, whose datasets contain highly sub-optimal trajectories, drive $\alpha$ above 10, shifting emphasis to RL. This task-aware scaling requires no manual tuning and highlights ASPC’s cross-task adaptability.

\textbf{Training Adaptability}
Combining the curves from Figures \ref{fig:alpha_curve_extra} and \ref{fig:alpha_curve_data}, we observe a common learning dynamic: $\alpha$ first drops (or rises only slightly) during the early training phase, indicating greater reliance on BC when the policy is still immature. As learning progresses and the critic stabilises, $\alpha$ gradually increases, handing more control to RL. This smooth, stage-wise adjustment underpins ASPC’s stable convergence across tasks and datasets.



\subsection{Necessity of Dynamic Scale Factor Adjustment}
\label{subsec:necessity}
As shown in Table~\ref{tab:performance_comparison}, the hyperparameters meticulously selected via grid search ultimately underperform compared to the ASPC algorithm, which dynamically adjusts hyperparameters during training.
This observation raises the question: is grid search simply failing to find the best setting, or is the dynamic adjustment in ASPC the true source of its advantage? To answer this, we conduct three controlled tests.  
\noindent \textbf{Naive $\alpha$.} TD3+BC is run with a fixed scale factor $\alpha = 2.5$.  
\textbf{Converged $\alpha$.} TD3+BC is run with $\alpha$ fixed to the final value reached by ASPC on the same dataset.  
\textbf{Linear $\alpha$.} TD3+BC starts from $\alpha = 2.5$ and linearly interpolates to the above converged value over the training horizon.
To ensure fairness, all TD3+BC variants utilize the same robust critic architecture as ASPC, comprising three hidden layers, each followed by a LayerNorm.

\begin{table}[htbp]
    \centering
    \caption{Results under different $\alpha$ settings. 
    Values in parentheses indicate the percent difference from Naive. 
    Blue denotes improvement, and red denotes degradation.}
    \label{tab:alpha_necessity}
    \small
    \begin{tabular}{c|c|c|c|c}
        \toprule
        Domain
        & Naive $\alpha$ 
        & Converged $\alpha$ 
        & Linear $\alpha$ 
        & Dynamic $\alpha$ (ASPC) \\ 
        \midrule
        Mujoco     
        & 70.3  
        & 79.3  (\textcolor{blue}{$\uparrow$12.8\%}) 
        & 77.0  (\textcolor{blue}{$\uparrow$9.5\%}) 
        & 82.1  (\textcolor{blue}{$\uparrow$16.8\%}) \\

        Maze2d     
        & 61.9  
        & 133.2 (\textcolor{blue}{$\uparrow$115.2\%}) 
        & 103.3 (\textcolor{blue}{$\uparrow$66.9\%})  
        & 147.2 (\textcolor{blue}{$\uparrow$137.8\%}) \\

        AntMaze    
        & 28.7  
        & 64.1  (\textcolor{blue}{$\uparrow$123.3\%}) 
        & 56.3  (\textcolor{blue}{$\uparrow$96.2\%})  
        & 74.5  (\textcolor{blue}{$\uparrow$159.2\%}) \\

        Adroit     
        & 49.9  
        & 49.1  (\textcolor{red}{$\downarrow$1.6\%}) 
        & 47.6  (\textcolor{red}{$\downarrow$4.6\%})  
        & 55.7  (\textcolor{blue}{$\uparrow$11.6\%}) \\
        \midrule
        Total Avg
        & 57.0  
        & 71.8  (\textcolor{blue}{$\uparrow$25.9\%})
        & 66.7  (\textcolor{blue}{$\uparrow$17.0\%})
        & 77.9  (\textcolor{blue}{$\uparrow$36.6\%}) \\
        \bottomrule
    \end{tabular}
\end{table}

Table \ref{tab:alpha_necessity} summarises the mean normalised scores in the four D4RL domains.
Percentages in blue report the relative gain over the naive baseline that fixes $\alpha=2.5$.
Converged $\alpha$ and Linear $\alpha$ both outperform the naive setting, which confirms that the value to which ASPC eventually converges is a much more appropriate scale for the policy constraint.
ASPC (Dynamic $\alpha$) still exceeds the Converged variant by a wide margin, and the Linear schedule closes only part of the gap.
These results show that simply finding a good fixed $\alpha$ is not enough. Adapting the scale throughout training is essential for the best performance. ASPC provides this dynamic adjustment automatically and therefore achieves the highest overall score.

\subsection{Runtime Analysis}
\label{subsec:runtime}


ASPC employs second-order gradient computations for updating $\alpha$, which increases cost. However, its update interval ($ k_{\alpha} $) can be set substantially longer than that of the actor ($ k_{\pi} $), thereby minimizing the additional computational overhead. To evaluate runtime efficiency, we compare the execution time of one million iterations of ASPC against that of other baseline algorithms. Figure~\ref{fig:runtime_comparison} presents a bar chart comparing the runtime of ASPC against TD3+BC, CQL, IQL, wPC and A2PR on the halfcheetah-medium-v2 dataset. The results indicate that ASPC introduces only a minimal additional computational overhead beyond that of TD3+BC.

\begin{figure}[htbp]
    \centering
    \includegraphics[width=0.7\linewidth]{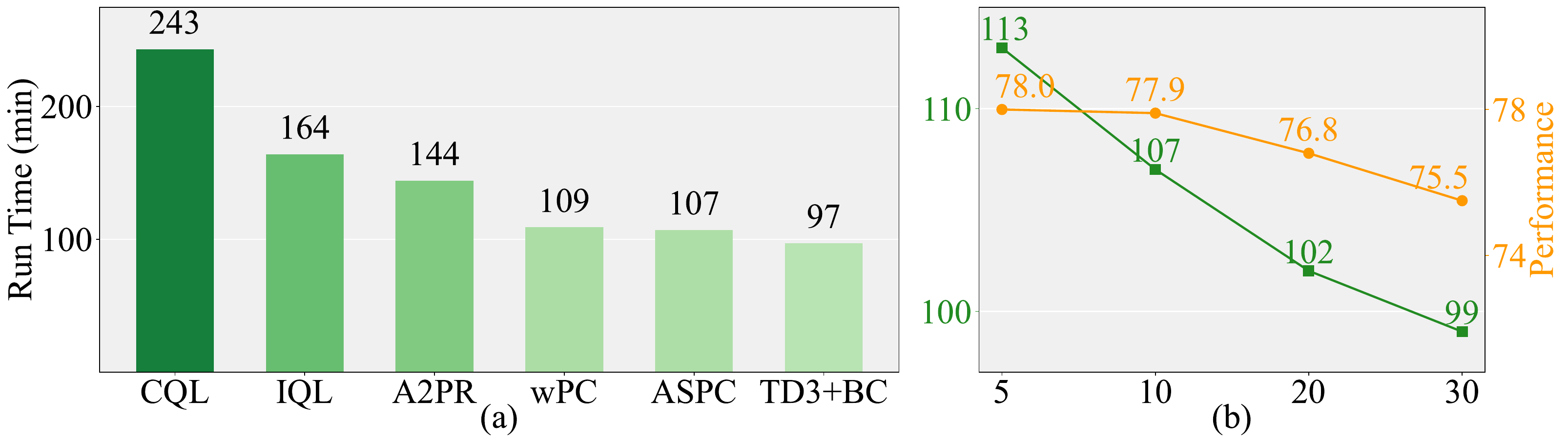}
    \caption{(a) Runtime comparison of different algorithms. (b) Runtime and average performance under different $\alpha$-update intervals ($k_{\alpha}$). ASPC introduces only minimal overhead compared to TD3+BC, and increasing the update interval reduces runtime while maintaining high performance.}
    \label{fig:runtime_comparison}
\end{figure}

We further analyze the relationship between $k_{\alpha}$, runtime, and performance, as illustrated in Figure~\ref{fig:runtime_comparison}. The baseline setting for $k_{\alpha}$ is 10, we observe that reducing $k_{\alpha}$ does not lead to significant performance degradation. This suggests that ASPC effectively captures the correct gradient optimization direction, maintaining robustness even when the gradient step size is large. When $k_{\alpha}$ is set to 30, the runtime is nearly identical to that of TD3+BC while maintaining strong performance. This highlights the efficiency of the ASPC algorithm.

\subsection{Ablation Studies}
\label{subsec:ablation}
\textbf{Robust Critic(RC)} 
\label{para:robust_critic} When using the original TD3+BC critic network (with two hidden layers and no LayerNorm), during the process of adjusting $\alpha$, Q-values exhibit significant instability, frequently leading to overestimation, causing catastrophic failure of the algorithm. Since wPC is also designed based on the original TD3+BC framework, we include it in our experiments related to RC (with three hidden layers and LayerNorm). Figure~\ref{fig:ablation_RC} presents the experimental results. The results indicate that when RC is not utilized, both wPC and ASPC achieve limited performance improvement and even exhibit performance degradation on certain tasks. 

\begin{figure}[htbp]
  \centering
  \begin{subfigure}{0.5\linewidth}
    \centering
    \includegraphics[width=\linewidth]{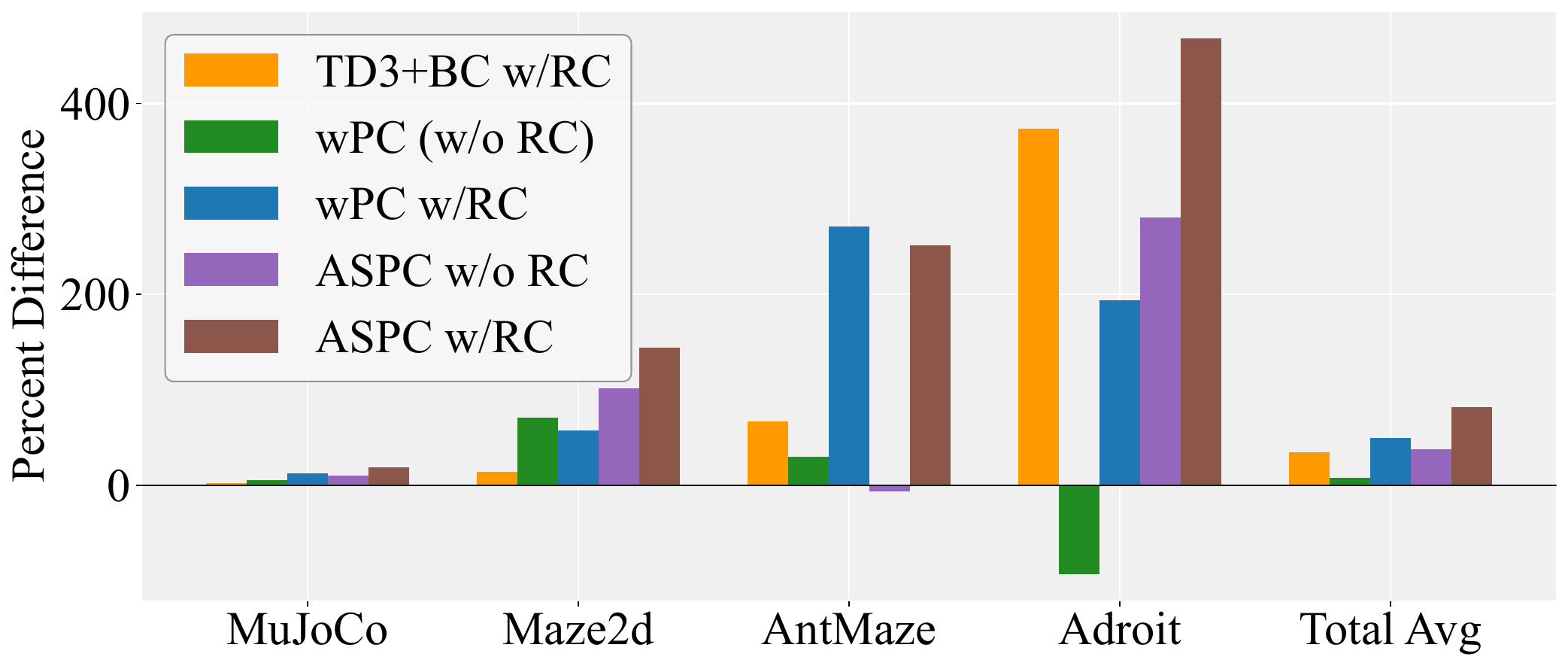}
    \caption{}
    \label{fig:ablation_RC}
  \end{subfigure}\hfill
  \begin{subfigure}{0.5\linewidth}
    \centering
    \includegraphics[width=\linewidth]{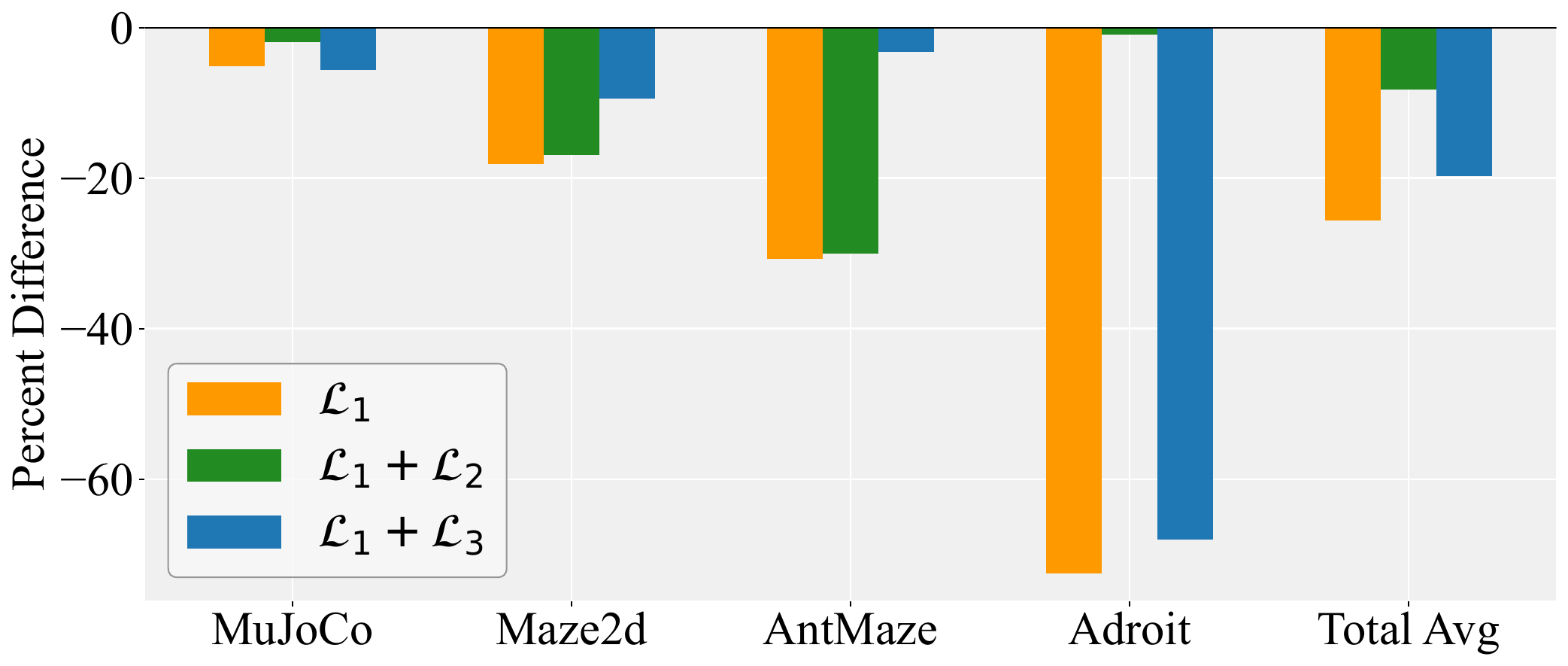}
    \caption{}
    \label{fig:ablation_alpha_loss}
  \end{subfigure}
  \caption{(a) Percent difference relative to the baseline TD3+BC (w/o~RC (critic with three hidden layers, each incorporating LayerNorm)). (b) Percent difference of outer loss variants \eqref{eq:outer_loss} relative to the full ASPC configuration.}
  \label{fig:ablation_combined}
\end{figure}



\textbf{Loss Function}\label{para:loss_function}
Figure~\ref{fig:ablation_alpha_loss} reveals clear, domain-dependent effects when the regularization terms are added to the base loss \(\mathcal{L}_{1}\).  
Adding neither term (\(\mathcal{L}_{1}\) only) gives the poorest performance.  
Introducing only \(\mathcal{L}_{2}\) lifts performance in MuJoCo and Adroit to the level of full ASPC, while leaving AntMaze almost unchanged.  
Conversely, adding only \(\mathcal{L}_{3}\) significantly boosts AntMaze but has little effect on MuJoCo or Adroit.  
For Maze2D, neither single term suffices. Only the full loss \(\mathcal{L}_{1}+\mathcal{L}_{2}+\mathcal{L}_{3}\) attains the best result.  
These results can be explained by Proposition~\ref{prop:deltaL-deltaQ}, which shows that \(\mathcal{L}_{2}\) and \(\mathcal{L}_{3}\) implicitly constrain one another.  
Consequently, adding \(\mathcal{L}_{2}\) in MuJoCo and Adroit implicitly bounds \(\Delta L_{BC}\) as well, so the single‐step performance guarantee of Theorem~\ref{thm:single-step-alg1} is already satisfied.  
Conversely, in AntMaze a direct \(\mathcal{L}_{3}\) penalty implicitly limits \((\Delta Q)^{2}\), again meeting the theorem’s lower bound.  
For Maze2D, however, neither implicit relation is strong enough; both \(\mathcal{L}_{2}\) and \(\mathcal{L}_{3}\) must be enforced explicitly for the condition in Theorem~\ref{thm:single-step-alg1} to hold.

\vspace{-17pt}

\subsection{Extending ASPC to Other Offline RL Methods}
\label{subsec:new_method}
Many offline RL algorithms follow the form of equation~\ref{eq:RLBC}. To evaluate the generality of ASPC, we integrate its adaptive policy constraint into three representative baselines, including IQL, CQL, and Diffusion-QL \cite{wang2023diffusion}. Each method contains a hyperparameter analogous to $\alpha$ that controls the balance between value learning and conservatism. We replace this manually tuned coefficient with a learnable parameter and update it using the same bi-level second-order procedure as ASPC. The detailed objectives for each algorithm are provided in Appendix \ref{appendix:aspc-integration}.
{\setlength{\tabcolsep}{3pt}
\begin{table*}[htbp]
\centering
\caption{Performance on Gym-MuJoCo datasets. +ASPC denotes the baseline combined with ASPC, and the percent change indicates its relative improvement over the baseline.}
\label{tab:new_method}
\small
\resizebox{1.0\linewidth}{!}
{
\begin{tabular}{l|cc|cc|cc}
\toprule
\textbf{Gym-MuJoCo} &
\textbf{IQL} & \textbf{+ASPC} &
\textbf{CQL} & \textbf{+ASPC} &
\textbf{Diffusion-QL} & \textbf{+ASPC} \\
\midrule
halfcheetah-medium
& 50.0
& 48.4 (\textcolor{red}{$\downarrow$3.2\%})
& 46.8
& 56.3 (\textcolor{blue}{$\uparrow$20.3\%})
& 51.5
& 59.2 (\textcolor{blue}{$\uparrow$15.0\%}) \\

halfcheetah-medium-expert
& 92.7
& 94.4 (\textcolor{blue}{$\uparrow$1.8\%})
& 94.2
& 93.6 (\textcolor{red}{$\downarrow$0.6\%})
& 96.8
& 96.7 (\textcolor{red}{$\downarrow$0.1\%}) \\

halfcheetah-medium-replay
& 42.1
& 44.4 (\textcolor{blue}{$\uparrow$5.5\%})
& 45.3
& 51.0 (\textcolor{blue}{$\uparrow$12.6\%})
& 47.8
& 58.2 (\textcolor{blue}{$\uparrow$21.8\%}) \\

hopper-medium
& 65.2
& 61.4 (\textcolor{red}{$\downarrow$5.8\%})
& 61.3
& 71.6 (\textcolor{blue}{$\uparrow$16.8\%})
& 90.5
& 101.0 (\textcolor{blue}{$\uparrow$11.6\%}) \\

hopper-medium-expert
& 85.5
& 100.2 (\textcolor{blue}{$\uparrow$17.2\%})
& 90.1
& 106.9 (\textcolor{blue}{$\uparrow$18.6\%})
& 111.1
& 111.1 (\textcolor{blue}{$\uparrow$0.0\%}) \\

hopper-medium-replay
& 89.6
& 88.3 (\textcolor{red}{$\downarrow$1.4\%})
& 77.5
& 79.9 (\textcolor{blue}{$\uparrow$3.1\%})
& 101.3
& 100.4 (\textcolor{red}{$\downarrow$0.9\%}) \\

walker2d-medium
& 80.7
& 83.9 (\textcolor{blue}{$\uparrow$4.0\%})
& 82.6
& 83.8 (\textcolor{blue}{$\uparrow$1.5\%})
& 87.0
& 80.3 (\textcolor{red}{$\downarrow$7.7\%}) \\

walker2d-medium-expert
& 112.1
& 112.1 (\textcolor{blue}{$\uparrow$0.0\%})
& 109.1
& 109.7 (\textcolor{blue}{$\uparrow$0.6\%})
& 110.1
& 110.5 (\textcolor{blue}{$\uparrow$0.4\%}) \\

walker2d-medium-replay
& 75.4
& 77.5 (\textcolor{blue}{$\uparrow$2.8\%})
& 74.5
& 81.7 (\textcolor{blue}{$\uparrow$9.7\%})
& 95.5
& 95.2 (\textcolor{red}{$\downarrow$0.3\%}) \\

\midrule
\textbf{Average}
& 77.0
& 79.0 (\textcolor{blue}{$\uparrow$2.5\%})
& 75.7
& 81.6 (\textcolor{blue}{$\uparrow$7.8\%})
& 88.0
& 90.3 (\textcolor{blue}{$\uparrow$2.6\%}) \\
\bottomrule
\end{tabular}
}
\end{table*}}

As shown in Table~\ref{tab:new_method}, incorporating ASPC consistently improves the performance of all three baselines, which demonstrates the broad applicability of our approach. IQL yields the smallest improvement, and a possible reason is that it performs implicit Q learning, so increasing $\alpha$ does not effectively shift the policy toward the RL objective. This implicit structure offers stability but limits the best achievable performance. CQL benefits more from ASPC because updating $\alpha$ directly adjusts the level of conservatism. Diffusion-QL already achieves very strong results, and ASPC further improves its performance, which highlights the robustness of ASPC even when applied to a strong baseline.

\subsection{Additional Experiments on OGBench}
\label{subsec:new_dataset}
We further evaluate the generality and robustness of ASPC on OGBench \cite{park2025ogbench}, a new benchmark for offline goal-conditioned RL. Results across ten datasets in Table~\ref{tab:OGBench} show that ASPC clearly surpasses all existing baselines, indicating strong applicability beyond D4RL. Since FQL \cite{fql_park2025} also follows equation~\ref{eq:RLBC}, we integrate ASPC by making its scale factor learnable and applying the same bi level optimization procedure, with details in Appendix \ref{appendix:aspc-integration}. This modification consistently improves FQL, further supporting the broad generality of ASPC across standard and goal-conditioned offline RL.

\begin{table*}[htbp]
\centering
\caption{Performance on OGBench. Each entry shows mean\,$\pm$\,std. FQL+ASPC includes the relative performance change over FQL. Bold numbers indicate the best performance for each task.}
\label{tab:OGBench}
\small
\resizebox{1.0\linewidth}{!}
{
\begin{tabular}{l|c|c|c|c|cc}
\toprule
\textbf{OGBench} &
\textbf{TD3+BC} &
\textbf{IQL} &
\textbf{ReBRAC} &
\textbf{ASPC} &
\textbf{FQL} &
\textbf{FQL+ASPC} \\
\midrule

antmaze-large-navigate-singletask-task1-v0
& $20 \pm 44$
& $48 \pm 9$
& $91 \pm 10$
& $\mathbf{93 \pm 4}$
& $80 \pm 8$
& $84$ (\textcolor{blue}{$\uparrow$5.0\%}) \\

antmaze-large-navigate-singletask-task2-v0
& $20 \pm 31$
& $42 \pm 6$
& $\mathbf{88 \pm 4}$
& $87 \pm 7$
& $57 \pm 10$
& $63$ (\textcolor{blue}{$\uparrow$10.5\%}) \\

antmaze-large-navigate-singletask-task3-v0
& $58 \pm 31$
& $72 \pm 7$
& $51 \pm 18$
& $\mathbf{96 \pm 4}$
& $93 \pm 3$
& $88$ (\textcolor{red}{$\downarrow$5.4\%}) \\

antmaze-large-navigate-singletask-task4-v0
& $31 \pm 37$
& $51 \pm 9$
& $84 \pm 7$
& $\mathbf{86 \pm 5}$
& $80 \pm 4$
& $70$ (\textcolor{red}{$\downarrow$12.5\%}) \\

antmaze-large-navigate-singletask-task5-v0
& $35 \pm 38$
& $54 \pm 2$
& $\mathbf{90 \pm 2}$
& $88 \pm 4$
& $83 \pm 4$
& $80$ (\textcolor{red}{$\downarrow$3.6\%}) \\

antmaze-giant-navigate-singletask-task1-v0
& $0 \pm 1$
& $0 \pm 0$
& $\mathbf{27 \pm 22}$
& $22 \pm 20$
& $4 \pm 5$
& $2$ (\textcolor{red}{$\downarrow$50.00\%}) \\

antmaze-giant-navigate-singletask-task2-v0
& $15 \pm 24$
& $1 \pm 1$
& $16 \pm 17$
& $\mathbf{74 \pm 19}$
& $9 \pm 7$
& $26$ (\textcolor{blue}{$\uparrow$188.9\%}) \\

antmaze-giant-navigate-singletask-task3-v0
& $0 \pm 1$
& $0 \pm 0$
& $\mathbf{34 \pm 22}$
& $18 \pm 13$
& $0 \pm 1$
& $0$  (\textcolor{blue}{$\uparrow$0.0\%}) \\

antmaze-giant-navigate-singletask-task4-v0
& $11 \pm 18$
& $0 \pm 0$
& $5 \pm 12$
& $\mathbf{65 \pm 18}$
& $14 \pm 23$
& $33$ (\textcolor{blue}{$\uparrow$135.7\%}) \\

antmaze-giant-navigate-singletask-task5-v0
& $16 \pm 25$
& $19 \pm 7$
& $49 \pm 22$
& $\mathbf{55 \pm 14}$
& $16 \pm 28$
& $49$ (\textcolor{blue}{$\uparrow$206.3\%}) \\

\midrule
\textbf{Average}
& $20.6$
& $28.7$
& $53.5$
& $\mathbf{68.4}$
& $43.6$
& $49.5$ (\textcolor{blue}{$\uparrow$13.5\%}) \\
\bottomrule
\end{tabular}
}
\end{table*}

\section{Conclusion}
\label{sec:conclusion}
We presented ASPC, a bi-level framework that adapts the RL–BC trade off by optimizing the scaling factor $\alpha$ through second-order updates. ASPC yields consistent improvements not only on TD3+BC but also when combined with other offline RL baselines, demonstrating strong generality. However, these simple integrations yield smaller gains than those seen with TD3+BC, indicating that different algorithms may require ASPC-style components tailored to their training dynamics. Future work includes developing such method-specific adaptive mechanisms under a unified principle and evaluating them on larger benchmarks and real-world datasets.

\section*{Acknowledgments}
This work was supported by the National Science and Technology Major Project of the Ministry of Science and Technology of China (2024ZD0608100), the National Natural Science Foundation of China (62506031, 62332017, U22A2022), the Postdoctoral Fellowship Program of CPSF under Grant Number GZC20251093.

\section*{Ethics Statement}
This work focuses on methodological advances in offline RL. All experiments are conducted on standard simulated benchmarks, which do not involve human subjects, personally identifiable information, or sensitive data. 
We strictly follow the licensing terms of all datasets and simulation platforms used in this study.  
Our method, Adaptive Scaling of Policy Constraints (ASPC), is designed to improve the stability and reliability of offline RL algorithms.  While RL has the potential for deployment in safety-critical domains, such as robotics and autonomous systems, the experiments in this paper remain purely in simulation.  Any real-world use of these methods should be preceded by domain-specific safety checks and human oversight to avoid unintended harm. 

\section*{Reproducibility Statement}
We have taken several measures to ensure the reproducibility of our work. 
The proposed method is described in detail in Section~\ref{sec:method}, 
and the complete theoretical derivations are provided in Appendix~\ref{sec:theo}. 
Experimental settings and hyperparameters are reported in Appendix~\ref{sec:ex_details}. 
Moreover, we include the full implementation code in the Supplementary Material to facilitate replication of all results.

\bibliography{iclr2026_conference}

@article{fujimoto2021minimalist,
  title={A minimalist approach to offline reinforcement learning},
  author={Fujimoto, Scott and Gu, Shixiang Shane},
  journal={Advances in neural information processing systems},
  volume={34},
  pages={20132--20145},
  year={2021}
}

@article{kumar2020conservative,
  title={Conservative q-learning for offline reinforcement learning},
  author={Kumar, Aviral and Zhou, Aurick and Tucker, George and Levine, Sergey},
  journal={Advances in Neural Information Processing Systems},
  volume={33},
  pages={1179--1191},
  year={2020}
}

@inproceedings{
    kostrikov2022offline,
    title={Offline Reinforcement Learning with Implicit Q-Learning},
    author={Ilya Kostrikov and Ashvin Nair and Sergey Levine},
    booktitle={International Conference on Learning Representations},
    year={2022},
    url={https://openreview.net/forum?id=68n2s9ZJWF8}
}

@article{an2021uncertainty,
  title={Uncertainty-based offline reinforcement learning with diversified q-ensemble},
  author={An, Gaon and Moon, Seungyong and Kim, Jang-Hyun and Song, Hyun Oh},
  journal={Advances in neural information processing systems},
  volume={34},
  pages={7436--7447},
  year={2021}
}

@article{chen2021decision,
  title={Decision transformer: Reinforcement learning via sequence modeling},
  author={Chen, Lili and Lu, Kevin and Rajeswaran, Aravind and Lee, Kimin and Grover, Aditya and Laskin, Misha and Abbeel, Pieter and Srinivas, Aravind and Mordatch, Igor},
  journal={Advances in neural information processing systems},
  volume={34},
  pages={15084--15097},
  year={2021}
}

@article{tarasov2024revisiting,
  title={Revisiting the minimalist approach to offline reinforcement learning},
  author={Tarasov, Denis and Kurenkov, Vladislav and Nikulin, Alexander and Kolesnikov, Sergey},
  journal={Advances in Neural Information Processing Systems},
  volume={36},
  year={2024}
}

@inproceedings{peng2023weighted,
  title={Weighted policy constraints for offline reinforcement learning},
  author={Peng, Zhiyong and Han, Changlin and Liu, Yadong and Zhou, Zongtan},
  booktitle={Proceedings of the AAAI Conference on Artificial Intelligence},
  volume={37},
  pages={9435--9443},
  year={2023}
}

@article{tarasov2024corl,
  title={CORL: Research-oriented deep offline reinforcement learning library},
  author={Tarasov, Denis and Nikulin, Alexander and Akimov, Dmitry and Kurenkov, Vladislav and Kolesnikov, Sergey},
  journal={Advances in Neural Information Processing Systems},
  volume={36},
  year={2024}
}

@inproceedings{ran2023policy,
  title={Policy regularization with dataset constraint for offline reinforcement learning},
  author={Ran, Yuhang and Li, Yi-Chen and Zhang, Fuxiang and Zhang, Zongzhang and Yu, Yang},
  booktitle={International Conference on Machine Learning},
  pages={28701--28717},
  year={2023},
  organization={PMLR}
}

@inproceedings{fujimoto2019off,
  title={Off-policy deep reinforcement learning without exploration},
  author={Fujimoto, Scott and Meger, David and Precup, Doina},
  booktitle={International conference on machine learning},
  pages={2052--2062},
  year={2019},
  organization={PMLR}
}

@article{levine2020offline,
  title={Offline reinforcement learning: Tutorial, review, and perspectives on open problems},
  author={Levine, Sergey and Kumar, Aviral and Tucker, George and Fu, Justin},
  journal={arXiv preprint arXiv:2005.01643},
  year={2020}
}

@article{fu2020d4rl,
  title={D4rl: Datasets for deep data-driven reinforcement learning},
  author={Fu, Justin and Kumar, Aviral and Nachum, Ofir and Tucker, George and Levine, Sergey},
  journal={arXiv preprint arXiv:2004.07219},
  year={2020}
}

@inproceedings{ball2023efficient,
  title={Efficient online reinforcement learning with offline data},
  author={Ball, Philip J and Smith, Laura and Kostrikov, Ilya and Levine, Sergey},
  booktitle={International Conference on Machine Learning},
  pages={1577--1594},
  year={2023},
  organization={PMLR}
}

@article{kingma2014adam,
  title={Adam: A method for stochastic optimization},
  author={Kingma, Diederik P},
  journal={arXiv preprint arXiv:1412.6980},
  year={2014}
}

@article{el2017deep,
  title={Deep Reinforcement Learning framework for Autonomous Driv-ing},
  author={El Sallab, Ahmad and Abdou, Mohammed and Perot, Etienne and Yogamani, Senthil},
  journal={stat},
  volume={1050},
  pages={8},
  year={2017}
}

@inproceedings{kendall2019learning,
  title={Learning to drive in a day},
  author={Kendall, Alex and Hawke, Jeffrey and Janz, David and Mazur, Przemyslaw and Reda, Daniele and Allen, John-Mark and Lam, Vinh-Dieu and Bewley, Alex and Shah, Amar},
  booktitle={2019 international conference on robotics and automation (ICRA)},
  pages={8248--8254},
  year={2019},
  organization={IEEE}
}

@inproceedings{prasad2017reinforcement,
  title={A reinforcement learning approach to weaning of mechanical ventilation in intensive care units},
  author={Prasad, Niranjani and Cheng, Li Fang and Chivers, Corey and Draugelis, Michael and Engelhardt, Barbara E},
  booktitle={33rd Conference on Uncertainty in Artificial Intelligence, UAI 2017},
  year={2017}
}

@inproceedings{wang2018supervised,
  title={Supervised reinforcement learning with recurrent neural network for dynamic treatment recommendation},
  author={Wang, Lu and Zhang, Wei and He, Xiaofeng and Zha, Hongyuan},
  booktitle={Proceedings of the 24th ACM SIGKDD international conference on knowledge discovery \& data mining},
  pages={2447--2456},
  year={2018}
}

@inproceedings{zhan2022deepthermal,
  title={Deepthermal: Combustion optimization for thermal power generating units using offline reinforcement learning},
  author={Zhan, Xianyuan and Xu, Haoran and Zhang, Yue and Zhu, Xiangyu and Yin, Honglei and Zheng, Yu},
  booktitle={Proceedings of the AAAI Conference on Artificial Intelligence},
  volume={36},
  pages={4680--4688},
  year={2022}
}

@article{yuan2024controlling,
  title={Controlling partially observed industrial system based on offline reinforcement learning—A case study of paste thickener},
  author={Yuan, Zhaolin and Zhang, ZiXuan and Li, Xiaorui and Cui, Yunduan and Li, Ming and Ban, Xiaojuan},
  journal={IEEE Transactions on Industrial Informatics},
  year={2024},
  publisher={IEEE}
}

@inproceedings{liu2024adaptive,
  title={Adaptive Advantage-Guided Policy Regularization for Offline Reinforcement Learning},
  author={Liu, Tenglong and Li, Yang and Lan, Yixing and Gao, Hao and Pan, Wei and Xu, Xin},
  booktitle={International Conference on Machine Learning},
  pages={31406--31424},
  year={2024},
  organization={PMLR}
}

@article{yang2024hundreds,
  title={Hundreds Guide Millions: Adaptive Offline Reinforcement Learning With Expert Guidance},
  author={Yang, Qisen and Wang, Shenzhi and Zhang, Qihang and Huang, Gao and Song, Shiji},
  journal={IEEE transactions on neural networks and learning systems},
  volume={35},
  number={11},
  pages={16288--16300},
  year={2024}
}

@inproceedings{yang2023boosting,
  title={Boosting offline reinforcement learning with action preference query},
  author={Yang, Qisen and Wang, Shenzhi and Lin, Matthieu Gaetan and Song, Shiji and Huang, Gao},
  booktitle={International Conference on Machine Learning},
  pages={39509--39523},
  year={2023},
  organization={PMLR}
}

@inproceedings{finn2017model,
  title={Model-agnostic meta-learning for fast adaptation of deep networks},
  author={Finn, Chelsea and Abbeel, Pieter and Levine, Sergey},
  booktitle={International conference on machine learning},
  pages={1126--1135},
  year={2017},
  organization={PMLR}
}

@article{bai2022pessimistic,
  title={Pessimistic bootstrapping for uncertainty-driven offline reinforcement learning},
  author={Bai, Chenjia and Wang, Lingxiao and Yang, Zhuoran and Deng, Zhihong and Garg, Animesh and Liu, Peng and Wang, Zhaoran},
  journal={arXiv preprint arXiv:2202.11566},
  year={2022}
}

@incollection{zhang2023uncertainty,
  title={Uncertainty-driven trajectory truncation for data augmentation in offline reinforcement learning},
  author={Zhang, Junjie and Lyu, Jiafei and Ma, Xiaoteng and Yan, Jiangpeng and Yang, Jun and Wan, Le and Li, Xiu},
  booktitle={ECAI 2023},
  pages={3018--3025},
  year={2023},
  publisher={IOS Press}
}

@article{janner2021offline,
  title={Offline reinforcement learning as one big sequence modeling problem},
  author={Janner, Michael and Li, Qiyang and Levine, Sergey},
  journal={Advances in neural information processing systems},
  volume={34},
  pages={1273--1286},
  year={2021}
}

@article{hong2023harnessing,
  title={Harnessing mixed offline reinforcement learning datasets via trajectory weighting},
  author={Hong, Zhang-Wei and Agrawal, Pulkit and Combes, R{\'e}mi Tachet des and Laroche, Romain},
  journal={arXiv preprint arXiv:2306.13085},
  year={2023}
}

@inproceedings{liu2024implicit,
  title={Implicit and Explicit Policy Constraints for Offline Reinforcement Learning},
  author={Liu, Yang and Hofert, Marius},
  booktitle={Causal Learning and Reasoning},
  pages={499--513},
  year={2024},
  organization={PMLR}
}

@article{JMLR:TorchOpt,
  author  = {Jie Ren* and Xidong Feng* and Bo Liu* and Xuehai Pan* and Yao Fu and Luo Mai and Yaodong Yang},
  title   = {TorchOpt: An Efficient Library for Differentiable Optimization},
  journal = {Journal of Machine Learning Research},
  year    = {2023},
  volume  = {24},
  number  = {367},
  pages   = {1--14},
  url     = {http://jmlr.org/papers/v24/23-0191.html}
}

@article{kumar2022offline,
  title={Offline q-learning on diverse multi-task data both scales and generalizes},
  author={Kumar, Aviral and Agarwal, Rishabh and Geng, Xinyang and Tucker, George and Levine, Sergey},
  journal={arXiv preprint arXiv:2211.15144},
  year={2022}
}

@article{lee2022multi,
  title={Multi-game decision transformers},
  author={Lee, Kuang-Huei and Nachum, Ofir and Yang, Mengjiao Sherry and Lee, Lisa and Freeman, Daniel and Guadarrama, Sergio and Fischer, Ian and Xu, Winnie and Jang, Eric and Michalewski, Henryk and others},
  journal={Advances in Neural Information Processing Systems},
  volume={35},
  pages={27921--27936},
  year={2022}
}

@inproceedings{nikulin2023anti,
  title={Anti-exploration by random network distillation},
  author={Nikulin, Alexander and Kurenkov, Vladislav and Tarasov, Denis and Kolesnikov, Sergey},
  booktitle={International Conference on Machine Learning},
  pages={26228--26244},
  year={2023},
  organization={PMLR}
}

@inproceedings{xiong2022deterministic,
  title={Deterministic policy gradient: Convergence analysis},
  author={Xiong, Huaqing and Xu, Tengyu and Zhao, Lin and Liang, Yingbin and Zhang, Wei},
  booktitle={Uncertainty in Artificial Intelligence},
  pages={2159--2169},
  year={2022},
  organization={PMLR}
}

@inproceedings{kakade2002approximately,
  title={Approximately optimal approximate reinforcement learning},
  author={Kakade, Sham and Langford, John},
  booktitle={Proceedings of the nineteenth international conference on machine learning},
  pages={267--274},
  year={2002}
}

@inproceedings{franceschi2018bilevel,
  title={Bilevel programming for hyperparameter optimization and meta-learning},
  author={Franceschi, Luca and Frasconi, Paolo and Salzo, Saverio and Grazzi, Riccardo and Pontil, Massimiliano},
  booktitle={International conference on machine learning},
  pages={1568--1577},
  year={2018},
  organization={PMLR}
}

@inproceedings{
wang2023diffusion,
title={Diffusion Policies as an Expressive Policy Class for Offline Reinforcement Learning},
author={Zhendong Wang and Jonathan J Hunt and Mingyuan Zhou},
booktitle={The Eleventh International Conference on Learning Representations },
year={2023},
url={https://openreview.net/forum?id=AHvFDPi-FA}
}

@inproceedings{fql_park2025,
  title={Flow Q-Learning},
  author={Seohong Park and Qiyang Li and Sergey Levine},
  booktitle={International Conference on Machine Learning (ICML)},
  year={2025},
}

@inproceedings{
park2025ogbench,
title={{OGB}ench: Benchmarking Offline Goal-Conditioned {RL}},
author={Seohong Park and Kevin Frans and Benjamin Eysenbach and Sergey Levine},
booktitle={The Thirteenth International Conference on Learning Representations},
year={2025},
url={https://openreview.net/forum?id=M992mjgKzI}
}
\bibliographystyle{iclr2026_conference}

\clearpage

\appendix
\section{Theoretical Proofs}
\label{sec:theo}
\subsection{Proof of Proposition~\ref{prop:deltaL-deltaQ}}\label{para:A1} 
Throughout the argument, we adopt the following shorthand.  
We index the policies as
\[
\pi_t \equiv \pi_{\theta}, \qquad \pi_{t+1} \equiv \pi_{\tilde\theta}.
\]
We write
\[
L_t^{\mathrm{BC}} \coloneqq \mathbb{E}_{(s,a)\sim\mathcal D}[\|\pi_t(s)-a\|^2],\quad
L_{t+1}^{\mathrm{BC}} \coloneqq \mathbb{E}_{(s,a)\sim\mathcal D}[\|\pi_{t+1}(s)-a\|^2],
\]
\[
\Delta L_{\mathrm{BC}} \coloneqq |L_{t+1}^{\mathrm{BC}}-L_t^{\mathrm{BC}}|,\quad
c \coloneqq \sqrt{L_t^{\mathrm{BC}}},\quad
x \coloneqq \mathbb{E}_{s\sim\mathcal D}[\|\pi_{t+1}(s)-\pi_t(s)\|^2].
\]


\begin{lemma}[Reverse triangle inequality]\label{lem:A1-rti}
For all $A,B\in\mathbb R$ one has $|A+B|\ge\bigl||A|-|B|\bigr|$.
\end{lemma}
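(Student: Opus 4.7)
The plan is to derive the reverse triangle inequality from two invocations of the ordinary triangle inequality, exploiting the identities $A=(A+B)+(-B)$ and $B=(A+B)+(-A)$. First I would write
\begin{equation*}
|A| \;=\; \bigl|(A+B)+(-B)\bigr| \;\le\; |A+B|+|B|,
\end{equation*}
which rearranges to $|A|-|B|\le|A+B|$. By symmetry, swapping the roles of $A$ and $B$ in the same step yields $|B|-|A|\le|A+B|$.

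Combining the two bounds, the larger of the signed quantities $|A|-|B|$ and $|B|-|A|$ is at most $|A+B|$; but this larger quantity is precisely $\bigl||A|-|B|\bigr|$ by the definition $|x|=\max\{x,-x\}$. This gives the claim for all $A,B\in\mathbb{R}$, and by inspection the argument extends verbatim to any normed space, which is how the lemma will subsequently be applied.

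There is essentially no technical obstacle here, since the statement is a textbook consequence of the triangle inequality; the only care required is to record both one‑sided inequalities before collapsing them via $\max\{x,-x\}=|x|$, so as not to accidentally drop the case in which $|B|>|A|$. I list this lemma explicitly only because it is the workhorse used later in Appendix~\ref{para:A1} to relate $\Delta L_{BC}$ and $(\Delta Q)^2$: when expanding $\pi_{t+1}(s)-a=(\pi_t(s)-a)+(\pi_{t+1}(s)-\pi_t(s))$, the reverse triangle inequality is what converts a bound on $\|\pi_{t+1}(s)-\pi_t(s)\|$ (obtained from $|\Delta Q|/L_Q$ via Assumption~\ref{as:lipschitz}) into both the upper and lower control on $\Delta L_{BC}$ claimed in Proposition~\ref{prop:deltaL-deltaQ}.
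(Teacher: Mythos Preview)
Your proof is correct and is the standard textbook derivation of the reverse triangle inequality from two applications of the ordinary triangle inequality. The paper itself does not prove this lemma at all; it merely states it as a known fact before invoking it in the expansion~\eqref{eq:A1-expand}, so there is no ``paper's own proof'' to compare against. Your inclusion of the remark about extension to normed spaces and the forward reference to how the lemma is used in bounding $\Delta L_{BC}$ is helpful commentary but not strictly part of the proof.
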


\begin{lemma}[Cauchy--Schwarz]\label{lem:A1-cs}
For square--integrable real random variables $X,Y$,
$\bigl|\mathbb E[XY]\bigr|\le \bigl(\mathbb E[X^{2}]\bigr)^{1/2}\bigl(\mathbb E[Y^{2}]\bigr)^{1/2}$.
\end{lemma}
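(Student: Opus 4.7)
The plan is to prove Cauchy--Schwarz via the classical non-negative-quadratic argument. First I would introduce, for each real parameter $t$, the auxiliary function $f(t) = \mathbb{E}\bigl[(X - tY)^{2}\bigr]$. Square-integrability of $X$ and $Y$, together with the elementary bound $2|XY|\le X^{2}+Y^{2}$, ensures that $XY$ is integrable, so by linearity of expectation $f(t)$ is finite and expands to $f(t) = \mathbb{E}[X^{2}] - 2t\,\mathbb{E}[XY] + t^{2}\,\mathbb{E}[Y^{2}]$. Since $(X - tY)^{2}\ge 0$ pointwise, $f(t)\ge 0$ for every $t\in\mathbb{R}$.

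Next I would split on the value of $\mathbb{E}[Y^{2}]$. When $\mathbb{E}[Y^{2}] > 0$, $f$ is a genuine quadratic in $t$ whose unique minimizer is $t^{\ast} = \mathbb{E}[XY]/\mathbb{E}[Y^{2}]$. Substituting this value gives $f(t^{\ast}) = \mathbb{E}[X^{2}] - (\mathbb{E}[XY])^{2}/\mathbb{E}[Y^{2}] \ge 0$, which, after multiplying through by $\mathbb{E}[Y^{2}]$, rearranges to $(\mathbb{E}[XY])^{2}\le \mathbb{E}[X^{2}]\,\mathbb{E}[Y^{2}]$. Taking square roots and using the monotonicity of $\sqrt{\,\cdot\,}$ on $[0,\infty)$ yields the stated inequality. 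Equivalently, one could invoke the fact that a non-negative quadratic must have non-positive discriminant, which gives the same bound directly.

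The only subtlety, and what I would flag as the main obstacle if overlooked, is the degenerate case $\mathbb{E}[Y^{2}] = 0$: here the ``quadratic'' reduces to the affine function $\mathbb{E}[X^{2}] - 2t\,\mathbb{E}[XY]$, and the previous discriminant argument is vacuous. In this case $Y=0$ almost surely, hence $XY=0$ almost surely, so $\mathbb{E}[XY]=0$ and both sides of the claimed inequality vanish; the bound holds trivially. Combining the two cases completes the proof, and it is worth noting that the same argument immediately identifies the equality case, namely when $X$ and $Y$ are almost-surely linearly dependent.
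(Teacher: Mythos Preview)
Your proof is correct and complete; the non-negative-quadratic/discriminant argument, with the degenerate case $\mathbb{E}[Y^{2}]=0$ handled separately, is the standard route. The paper itself does not supply a proof of this lemma---it is stated as a classical fact and simply invoked in the derivation of Proposition~\ref{prop:deltaL-deltaQ}---so there is nothing to compare against.
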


\begin{proof}
The proof proceeds in three steps.

Step 1: A lower bound on $\Delta L_{BC}$.  Expand the definition of $\Delta L_{BC}$ and simplify:
\begin{equation}
\begin{aligned}
\Delta L_{BC}
&= \Bigl| \mathbb E_{s}\!\Bigl[\,
      \bigl\|\pi_{t+1}(s)-\pi_{\beta}(s)\bigr\|^{2}
      -\bigl\|\pi_{t}(s)-\pi_{\beta}(s)\bigr\|^{2}
   \Bigr] \Bigr| \\[6pt]
&= \Bigl| \mathbb E_{s}\!\Bigl[
      \bigl(\pi_{t+1}(s)-\pi_{\beta}(s)\bigr)^{\!\top}
      \bigl(\pi_{t+1}(s)-\pi_{\beta}(s)\bigr) 
        -\bigl(\pi_{t}(s)-\pi_{\beta}(s)\bigr)^{\!\top}
       \bigl(\pi_{t}(s)-\pi_{\beta}(s)\bigr)
   \Bigr] \Bigr| \\[6pt]
&= \Bigl| \mathbb E_{s}\!\Bigl[
      \bigl(\pi_{t+1}(s)-\pi_{\beta}(s) +\pi_{t}(s)-\pi_{\beta}(s)\bigr)^{\!\top} \cdot 
       \bigl(\pi_{t+1}(s)-\pi_{t}(s)\bigr)
   \Bigr] \Bigr| \\[6pt]
&= \Bigl| \mathbb E_{s}\!\Bigl[
      \bigl\|\pi_{t+1}(s)-\pi_{t}(s)\bigr\|^{2}
      +2\,
      \bigl(\pi_{t}(s)-\pi_{\beta}(s)\bigr)^{\!\top}
      \bigl(\pi_{t+1}(s)-\pi_{t}(s)\bigr)
   \Bigr] \Bigr| \\[6pt]
&= \Bigl|\,x
        +2\,\mathbb E_{s}\!\Bigl[
              \bigl(\pi_{t}(s)-\pi_{\beta}(s)\bigr)^{\!\top}
              \bigl(\pi_{t+1}(s)-\pi_{t}(s)\bigr)
           \Bigr]
   \Bigr| \\[6pt]
&\overset{\ref{lem:A1-rti}}{\ge}
   |x| - 2 \Bigl|
              \mathbb E_{s}\Bigl[
                \bigl(\pi_{t}(s)-\pi_{\beta}(s)\bigr)^{\!\top}
                \bigl(\pi_{t+1}(s)-\pi_{t}(s)\bigr)
              \Bigr]
   \Bigr| \\[6pt]
&= x - 2 \Bigl|
              \mathbb E_{s}\Bigl[
                \bigl(\pi_{t}(s)-\pi_{\beta}(s)\bigr)^{\!\top}
                \bigl(\pi_{t+1}(s)-\pi_{t}(s)\bigr)
              \Bigr]
   \Bigr| \\[6pt]
&\overset{\ref{lem:A1-cs}}{\ge}
   x - 2\,\sqrt{\mathbb E_{s}\!
             \bigl\|\pi_{t}(s)-\pi_{\beta}(s)\bigr\|^{2}} \cdot  \sqrt{\mathbb E_{s}\!
             \bigl\|\pi_{t+1}(s)-\pi_{t}(s)\bigr\|^{2}} \\[6pt]
&= x - 2c\sqrt{x}\,.
\end{aligned}
\label{eq:A1-expand}
\end{equation}
Since $\Delta L_{BC} \ge 0$ by definition, combining with
\eqref{eq:A1-expand} yields
\begin{equation}\label{eq:A1-dL-case}
\Delta L_{BC}
\;\ge\;\max\bigl\{\,x-2c\sqrt{x},\;0\,\bigr\}.
\end{equation}
Step 2: An upper bound on $(\Delta Q)^{2}$.  Jensen's inequality and the assumption \ref{as:lipschitz} yield
\begin{equation}
\begin{aligned}
(\Delta Q)^2
&= \Bigl(\mathbb E_s\bigl[Q(s,\pi_{t+1}(s)) - Q(s,\pi_t(s))\bigr]\Bigr)^2 \\[6pt]
&\le\;
\mathbb E_s\bigl[(Q(s,\pi_{t+1}(s)) - Q(s,\pi_t(s)))^2\bigr] \\[6pt]
&\le\;L_Q^2\mathbb E_{s}\,\|\pi_{t+1}(s)-\pi_{t}(s)\|^{2} \\[6pt]
&= L_Q^2\,x
\end{aligned}
\label{eq:A1-dQ-x}
\end{equation}

Step 3: Mutual bound.
From \eqref{eq:A1-dQ-x} we have 
\begin{equation}
    x \ge x_{\min} := (\Delta Q)^{2}/L_{Q}^{2}. 
\end{equation}
Since $\Delta L_{BC} \ge \max\{x - 2c\sqrt{x}, 0\}$ from 
\eqref{eq:A1-dL-case}, we relate this expression to $\Delta Q$ as follows.
The function $h(x)=x-2c\sqrt{x}$ is non-positive on $[0,4c^{2}]$ and 
strictly increasing on $[4c^{2},\infty)$. When $|\Delta Q| \le 2cL_Q$, we have 
$x_{\min} \le 4c^{2}$, and $h(x_{\min})$ is non-positive; thus 
$\Delta L_{BC} \ge 0 \ge h(x_{\min})$. When $|\Delta Q| > 2cL_Q$, we have 
$x_{\min} > 4c^{2}$ and $h(x)$ is increasing for all $x \ge x_{\min}$, which gives 
$\Delta L_{BC} \ge h(x_{\min})$. Combining the two regimes yields the bound
\begin{equation}
\Delta L_{BC} \;\ge\; 
\max\!\Bigl\{\,0,\; 
\frac{(\Delta Q)^{2}}{L_{Q}^{2}} - 2c\,\frac{|\Delta Q|}{L_{Q}}
\Bigr\}.
\label{eq:A1-lower-final}
\end{equation}


Similarly, using~\eqref{eq:A1-dL-case} we obtain the following upper
bounds for \(x\):
\begin{equation}\label{eq:x-upper-final}
x \;\le\;
\bigl(c+\sqrt{c^{2}+\Delta L_{BC}}\bigr)^{2}.
\end{equation}
Combining~\eqref{eq:A1-dQ-x} with~\eqref{eq:x-upper-final} gives an
upper bound on \((\Delta Q)^{2}\):
\begin{equation}\label{eq:A1-upper-final}
(\Delta Q)^{2}
\;\le\;
L_{Q}^{2}\bigl(c+\sqrt{c^{2}+\Delta L_{BC}}\bigr)^{2}.
\end{equation}
~\eqref{eq:A1-lower-final} and~\eqref{eq:A1-upper-final}
together yield the desired mutual bounds.
\end{proof}

\subsection{Proof of Proposition~\ref{prop:one-step-lower}}\label{para:A2}
This section analyses conditions under which the one-step performance difference
$J(\pi_{t+1})-J(\pi_t)$ admits a tractable lower bound when training on a fixed offline dataset $D$ collected under behavior policy $\pi_\beta$ (so $D\approx d_{\pi_\beta}$).

\begin{lemma}[Performance‑difference lemma]\label{lem:perf-diff}
For any policies $\pi_1$ and $\pi_2$,  
\begin{align}
& J(\pi_1)-J(\pi_2) =\frac{1}{1-\gamma}\,\mathbb E_{s\sim d_{\pi_1}}\bigl[\mathbb E_{a\sim\pi_1}Q^{\pi_2}(s,a)-V^{\pi_2}(s)\bigr].
\end{align}
\end{lemma}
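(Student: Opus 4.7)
The plan is to apply the classical Kakade--Langford telescoping identity in the discounted setting. I would start by writing $J(\pi_1) = \mathbb{E}_{s_0 \sim \mu,\; \tau \sim \pi_1}\bigl[\sum_{t=0}^{\infty}\gamma^{t} r(s_t,a_t)\bigr]$ and $J(\pi_2) = \mathbb{E}_{s_0 \sim \mu}[V^{\pi_2}(s_0)]$, where $\mu$ denotes the initial-state distribution and trajectories $\tau$ are generated under $\pi_1$.

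Along any such trajectory, $V^{\pi_2}(s_0)$ can be written as the collapsing series $V^{\pi_2}(s_0) = -\sum_{t=0}^{\infty}\gamma^{t}\bigl(\gamma V^{\pi_2}(s_{t+1}) - V^{\pi_2}(s_t)\bigr)$, because consecutive terms telescope and $\gamma^{t+1}V^{\pi_2}(s_{t+1}) \to 0$ (discounting plus boundedness of rewards makes $V^{\pi_2}$ uniformly bounded). Substituting this identity into $J(\pi_1)-J(\pi_2)$ and swapping sum and expectation, which is justified by Fubini given $\gamma<1$ and bounded rewards, yields $J(\pi_1)-J(\pi_2) = \mathbb{E}_{\tau\sim\pi_1}\bigl[\sum_{t}\gamma^{t}\bigl(r(s_t,a_t) + \gamma V^{\pi_2}(s_{t+1}) - V^{\pi_2}(s_t)\bigr)\bigr]$.

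Next, I would apply the Bellman relation $Q^{\pi_2}(s,a) = r(s,a) + \gamma\,\mathbb{E}_{s' \sim P(\cdot\mid s,a)}[V^{\pi_2}(s')]$ after taking the inner conditional expectation over $s_{t+1}$ given $(s_t,a_t)$. Each summand then collapses to $Q^{\pi_2}(s_t,a_t) - V^{\pi_2}(s_t)$. Finally, I would convert the resulting time-average into a state-visitation expression using the discounted occupancy $d_{\pi_1}(s) = (1-\gamma)\sum_{t=0}^{\infty}\gamma^{t}\Pr(s_t = s\mid\pi_1)$, which absorbs $\sum_t \gamma^t$ into an expectation under $d_{\pi_1}$ scaled by $1/(1-\gamma)$. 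Since $V^{\pi_2}(s)$ does not depend on $a$, it can be pulled out of the action expectation, producing exactly the claimed identity.

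The work is more bookkeeping than analysis: the telescoping rewriting is the key algebraic move, and the Bellman substitution together with the occupancy-measure rewriting is routine once the sum-expectation interchange has been justified. The only mild subtlety is making sure the telescoping and Fubini steps are legitimate, which holds automatically under bounded rewards and $\gamma\in[0,1)$, so no deep technical obstacle arises.
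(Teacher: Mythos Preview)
Your proposal is correct and follows the standard Kakade--Langford telescoping argument, which is exactly the approach the paper defers to by citing \cite{kakade2002approximately} rather than reproving the lemma. There is nothing to add; the telescoping, Bellman substitution, and occupancy-measure rewriting you outline are precisely the classical proof.
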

The proof of Lemma \ref{lem:perf-diff} can be found in \cite{kakade2002approximately}.

\begin{lemma} \label{lem:dist-shift-l1}
Under Assumption \ref{as:lipschitz}, the total variation distance between the visitation distributions of any policy $\pi$ and the behavior policy $\pi_{\beta}$ satisfies
\begin{align}
\|d_{\pi} - d_{\pi_{\beta}}\|_1
&= \int_{s} \bigl| d_{\pi}(s) - d_{\pi_{\beta}}(s) \bigr|\, \mathrm{d}s \le C\, L_P\, \max_{s \in \mathcal{S}} \| \pi(s) - \pi_{\beta}(s) \|. \label{eq:dp_bound}
\end{align}
where $C>0$ is a constant. 
\end{lemma}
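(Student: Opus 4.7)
My plan is to decompose the $L_1$ gap between the two $\gamma$-discounted state visitation distributions as a geometric sum over time steps and then telescope the per-step difference using the Lipschitz property of $P$ together with the non-expansiveness of Markov transition operators acting on signed measures.

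First I would write the discounted visitation as $d_\pi(s)=(1-\gamma)\sum_{t=0}^\infty \gamma^t P_t^\pi(s)$, where $P_t^\pi$ denotes the law of $s_t$ under $\pi$ starting from a fixed initial distribution. The triangle inequality immediately gives
\begin{equation*}
\|d_\pi-d_{\pi_\beta}\|_1 \;\le\; (1-\gamma)\sum_{t=0}^\infty \gamma^t \,\|P_t^\pi-P_t^{\pi_\beta}\|_1 .
\end{equation*}
Since $P_0^\pi=P_0^{\pi_\beta}$, it suffices to control $\|P_t^\pi-P_t^{\pi_\beta}\|_1$ for $t\ge 1$.

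Next I would introduce the Markov operators $T^\pi$ and $T^{\pi_\beta}$ defined by $(T^\pi\mu)(s')=\int P(s'\mid s,\pi(s))\,\mu(s)\,\mathrm{d}s$, and split
\begin{equation*}
P_{t+1}^\pi-P_{t+1}^{\pi_\beta} \;=\; T^\pi\bigl(P_t^\pi-P_t^{\pi_\beta}\bigr) \;+\; \bigl(T^\pi-T^{\pi_\beta}\bigr)P_t^{\pi_\beta}.
\end{equation*}
The first term is handled by the fact that any Markov kernel is $L_1$-non-expansive on signed measures. The second term, after moving the absolute value inside the outer integral, reduces to $\int P_t^{\pi_\beta}(s)\,\|P(\cdot\mid s,\pi(s))-P(\cdot\mid s,\pi_\beta(s))\|_1\,\mathrm{d}s$, which by \eqref{eq:lipschitz-p} is at most $L_P\max_s\|\pi(s)-\pi_\beta(s)\|$. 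A straightforward induction then yields $\|P_t^\pi-P_t^{\pi_\beta}\|_1 \le t\,L_P\max_s\|\pi(s)-\pi_\beta(s)\|$.

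Plugging this bound back into the geometric sum and using $(1-\gamma)\sum_{t=0}^\infty t\gamma^t=\gamma/(1-\gamma)$ gives the claimed inequality with $C=\gamma/(1-\gamma)$. The main obstacle is interpretive rather than technical: Assumption \ref{as:lipschitz} writes the distribution distance as $\|\cdot\|$ without a subscript, so one must first commit to reading this as the total variation (equivalently $L_1$) norm in order to align it with the contraction property of $T^\pi$; once this is fixed, the split above and the Lipschitz bound compose cleanly, and all that remains is the elementary series evaluation.
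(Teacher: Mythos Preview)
Your argument is correct and is in fact the standard way to establish this kind of bound: expand the discounted visitation as a geometric series, telescope via the Markov operator, use $L_1$-non-expansiveness for the first piece and the Lipschitz assumption \eqref{eq:lipschitz-p} for the second, then sum the resulting linear-in-$t$ bound. The explicit constant $C=\gamma/(1-\gamma)$ you obtain is correct.

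There is nothing to compare against in the paper itself: the paper does not prove Lemma~\ref{lem:dist-shift-l1} but simply defers to the appendix of \cite{xiong2022deterministic}. Your self-contained derivation is therefore strictly more informative than what the paper provides, and your observation that one must read the unlabeled norm in \eqref{eq:lipschitz-p} as total variation is exactly the right caveat to make.
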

The proof of Lemma \ref{lem:dist-shift-l1} can be found in the appendix of \cite{xiong2022deterministic}.

\begin{lemma}[Sup‐norm version of \eqref{eq:x-upper-final}]\label{lem:x-upper-sup}
Define
\begin{align*}
x_{\infty} &\;\coloneqq\; \sup_{s} \|\pi_{t+1}(s) - \pi_{t}(s)\|^{2},\\
c_{\infty}^{2} &\;\coloneqq\; \sup_{s} \|\pi_{t}(s) - \pi_{\beta}(s)\|^{2}, \\
\Delta L_{\infty}^{BC} &\;\coloneqq\;
\sup_{s} \bigl|\|\pi_{t+1}(s) - \pi_{\beta}(s)\|^{2}
          - \|\pi_{t}(s) - \pi_{\beta}(s)\|^{2} \bigr|.
\end{align*}
Then
\begin{equation}
x_{\infty}
\;\le\;
\bigl(c_{\infty} + \sqrt{c_{\infty}^{2} + \Delta L_{\infty}^{BC}}\bigr)^{2}.
\end{equation}
\end{lemma}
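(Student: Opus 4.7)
The plan is to mirror the derivation of~\eqref{eq:x-upper-final} inside the proof of Proposition~\ref{prop:deltaL-deltaQ}, but to replace the expectation-based Cauchy--Schwarz step with a pointwise inequality so that every estimate survives when we take suprema over $s$. Writing $\delta(s) \coloneqq \pi_{t+1}(s)-\pi_{t}(s)$ and $e(s) \coloneqq \pi_{t}(s)-\pi_{\beta}(s)$, the first step is to record the purely algebraic identity $\|\pi_{t+1}(s)-\pi_{\beta}(s)\|^{2} - \|\pi_{t}(s)-\pi_{\beta}(s)\|^{2} = \|\delta(s)\|^{2} + 2\,\delta(s)^{\top}e(s)$, which holds state by state. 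The definition of $\Delta L_{\infty}^{BC}$ as a supremum over $s$ then immediately gives $\bigl|\,\|\delta(s)\|^{2} + 2\,\delta(s)^{\top}e(s)\,\bigr| \le \Delta L_{\infty}^{BC}$ for every $s\in\mathcal{S}$.

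Next, I would fix a state $s^{\star}$ that (nearly) attains $x_{\infty}$, so that $\|\delta(s^{\star})\|^{2}=x_{\infty}$; if the supremum is not attained, one works with an $\varepsilon$-maximizer and passes $\varepsilon\to 0$ at the end. At such a state the scalar Cauchy--Schwarz inequality yields $|\delta(s^{\star})^{\top}e(s^{\star})| \le \|\delta(s^{\star})\|\,\|e(s^{\star})\| \le \sqrt{x_{\infty}}\,c_{\infty}$, since $c_{\infty}=\sup_{s}\|e(s)\|$. Combining this with the reverse triangle inequality applied to the pointwise bound from the first step gives $x_{\infty} - 2\,c_{\infty}\sqrt{x_{\infty}} \le \Delta L_{\infty}^{BC}$. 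The remainder is routine: setting $y\coloneqq\sqrt{x_{\infty}}$ yields the quadratic inequality $y^{2} - 2c_{\infty}\,y - \Delta L_{\infty}^{BC} \le 0$, from which the quadratic formula gives $y \le c_{\infty} + \sqrt{c_{\infty}^{2} + \Delta L_{\infty}^{BC}}$, and squaring delivers the claimed bound on $x_{\infty}$.

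The main obstacle is conceptual rather than computational: the three quantities $x_{\infty}$, $c_{\infty}^{2}$, and $\Delta L_{\infty}^{BC}$ are defined by three \emph{independent} suprema, and in principle none of their maximizers need coincide. The expectation-level proof of~\eqref{eq:x-upper-final} sidesteps this issue because Cauchy--Schwarz on an inner product of random variables respects the single outer expectation. Here I circumvent it by first establishing the pointwise bound $\bigl|\,\|\delta(s)\|^{2}+2\,\delta(s)^{\top}e(s)\,\bigr| \le \Delta L_{\infty}^{BC}$, which holds \emph{uniformly} in $s$, and only then evaluating at an (approximate) maximizer of $\|\delta(\cdot)\|^{2}$; because the right-hand side is already a sup over all $s$, no mismatch between maximizers arises. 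All remaining steps reduce to one scalar Cauchy--Schwarz estimate and one quadratic formula, so the argument is short once this uniformity is in place.
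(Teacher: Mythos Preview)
Your argument is correct, but it takes a different route from the paper's own proof. You mirror the expectation-level derivation of~\eqref{eq:x-upper-final}: expand $\|\pi_{t+1}(s)-\pi_{\beta}(s)\|^{2}-\|\pi_{t}(s)-\pi_{\beta}(s)\|^{2}=\|\delta(s)\|^{2}+2\,\delta(s)^{\top}e(s)$, bound the cross term by a pointwise Cauchy--Schwarz, and then solve the resulting quadratic in $\sqrt{x_{\infty}}$. The paper instead bypasses all of this with a single triangle inequality applied state by state: writing $\pi_{t+1}(s)-\pi_t(s)=(\pi_{t+1}(s)-\pi_{\beta}(s))-(\pi_t(s)-\pi_{\beta}(s))$ gives $\|\pi_{t+1}(s)-\pi_t(s)\|\le\|\pi_{t+1}(s)-\pi_{\beta}(s)\|+\|\pi_t(s)-\pi_{\beta}(s)\|$, and since $\|\pi_{t+1}(s)-\pi_{\beta}(s)\|^{2}=\|\pi_t(s)-\pi_{\beta}(s)\|^{2}+\Delta L_{BC}(s)\le c_{\infty}^{2}+\Delta L_{\infty}^{BC}$, the bound follows immediately with no quadratic to solve and no need to reason about maximizers. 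Your approach buys an explicit parallel to the Proposition~\ref{prop:deltaL-deltaQ} argument and makes the ``independent suprema'' issue visible and then resolves it cleanly; the paper's approach is shorter and sidesteps that issue entirely because the triangle inequality already holds pointwise.
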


\begin{proof}
For each \(s\), let 
\(\Delta L_{BC}(s)=\|\pi_{t+1}(s)-\pi_{\beta}(s)\|^{2}
               -\|\pi_{t}(s)-\pi_{\beta}(s)\|^{2}\).
Then
\begin{equation}
\begin{aligned}
\|\pi_{t+1}(s)-\pi_{t}(s)\|^{2} & =[(\pi_{t+1}(s)-\pi_{\beta}(s))
         -(\pi_{t}(s)-\pi_{\beta}(s))]^{2}\\
&\le\bigl(\|\pi_{t+1}(s)-\pi_{\beta}(s)\|
         +\|\pi_{t}(s)-\pi_{\beta}(s)\|\bigr)^{2}\\
&=\Bigl(\sqrt{\|\pi_{t+1}(s)-\pi_{\beta}(s)\|^{2}}
         +\|\pi_{t}(s)-\pi_{\beta}(s)\|\Bigr)^{2}\\
&=\Bigl(\sqrt{\|\pi_{t}(s)-\pi_{\beta}(s)\|^{2}
            +\Delta L_{BC}(s)}
         +\|\pi_{t}(s)-\pi_{\beta}(s)\|\Bigr)^{2}\\
&\le\Bigl(c_{\infty}+\sqrt{c_{\infty}^{2}+\Delta L_{\infty}^{BC}}\Bigr)^{2}.
\end{aligned}
\end{equation}
Taking the supremum over \(s\) gives the stated result:
\begin{align}
x_{\infty}
&=\sup_{s}\|\pi_{t+1}(s)-\pi_{t}(s)\|^{2} \;\le\;
\bigl(c_{\infty}+\sqrt{c_{\infty}^{2}+\Delta L_{\infty}^{BC}}\bigr)^{2}.
\end{align}
The proof of Lemma \ref{lem:x-upper-sup} is finished. 
\end{proof}
\begin{proof}
In our deterministic setting, the conditional action distribution $\pi(\cdot|s)$ for any state $s$ is a Dirac measure concentrated at a single action. 
Specifically, for $\pi_2$ in Lemma \ref{lem:perf-diff} we have:
\begin{equation}
V^{\pi_2}(s) = \mathbb E_{a\sim\pi_2} [Q^{\pi_2}(s,a)] = Q^{\pi_2}(s, \pi_2(s)),
\end{equation}
Applying Lemma~\ref{lem:perf-diff} with $\pi_1 = \pi_{t+1}$ and $\pi_2 = \pi_t$ gives:
\begin{align}
& J(\pi_{t+1})-J(\pi_t) =\frac{1}{1-\gamma}\,\mathbb E_{s\sim d_{\pi_{t+1}}}\bigl[Q^{\pi_t}(s,\pi_{t+1}(s))-Q^{\pi_t}(s,\pi_t(s))\bigr].
\label{eq:A2-perf-diff}
\end{align}
Write the performance–difference identity \eqref{eq:A2-perf-diff} as  
\begin{equation}
\begin{aligned}
J(\pi_{t+1})-J(\pi_t)
&=\frac{1}{1-\gamma}\,
      \mathbb E_{s\sim d_{\pi_{t+1}}}
      \bigl[Q^{\pi_t}(s,\pi_{t+1}(s))
            -Q^{\pi_t}(s,\pi_t(s))\bigr]                                   \\[4pt]
&=\frac{1}{1-\gamma}\Bigl\{
      \mathbb E_{s\sim D}
      \bigl[Q^{\pi_t}(s,\pi_{t+1}(s))
            -Q^{\pi_t}(s,\pi_t(s))\bigr]                                   \\[-2pt]
&\hspace{0pt}+\int\!
      \bigl(d_{\pi_{t+1}}(s)-D(s)\bigr)
      \bigl(Q^{\pi_t}(s,\pi_{t+1}(s)) 
            -Q^{\pi_t}(s,\pi_t(s))\bigr)\,ds
      \Bigr\}                                                             \\[6pt]
&\ge\frac{1}{1-\gamma}\Bigl\{
      \underbrace{
      \mathbb E_{s\sim D}
      \bigl[Q^{\pi_t}(s,\pi_{t+1}(s))
            -Q^{\pi_t}(s,\pi_t(s))\bigr]}_{\displaystyle\Delta Q}
            \\[-2pt]
&      -\Bigl|\!\int\!
      \bigl(d_{\pi_{t+1}}(s)-D(s)\bigr)
      \bigl(Q^{\pi_t}(s,\pi_{t+1}(s)) 
            -Q^{\pi_t}(s,\pi_t(s))\bigr)\,ds\Bigr|\Bigr\}   \\
&\ge\frac{1}{1-\gamma}\Bigl\{
      \Delta Q
      -\|d_{\pi_{t+1}}-d_{\pi_{\beta}}\|_{1}\cdot \sup_{s}\bigl|Q^{\pi_t}(s,\pi_{t+1}(s))
                     -Q^{\pi_t}(s,\pi_t(s))\bigr|\Bigr\}\\[6pt]
&\overset{~\ref{lem:dist-shift-l1}}{\ge}
        \frac{1}{1-\gamma}\Bigl\{
             \Delta Q
             -C\,L_{P}\,\max_{s}\|\pi_{t+1}-\pi_{\beta}\| \cdot \, \sup_{s}\bigl|Q^{\pi_t}(s,\pi_{t+1}(s))
                     -Q^{\pi_t}(s,\pi_t(s))\bigr|\Bigr\}\\[6pt]
&\overset{~\ref{as:lipschitz}}{\ge}
\frac{1}{1-\gamma}\Bigl\{
     \Delta Q
     -C\,L_{P}L_{Q}\,\max_{s}\|\pi_{t+1}-\pi_{\beta}\| \cdot \, \max_{s}\|\pi_{t+1}-\pi_{t}\|\Bigr\}\\[6pt]
&\ge \frac{1}{1-\gamma}\Bigl\{
     \Delta Q
     -C\,L_{P}L_{Q}\,(\max_{s}\|\pi_{t+1}-\pi_{t}\|
                    + \max_{s}\|\pi_{t}-\pi_{\beta}\|)\,
                  \max_{s}\|\pi_{t+1}-\pi_{t}\|\Bigr\}\\[6pt]
&= \frac{1}{1-\gamma}\Bigl\{
     \Delta Q
     -C\,L_{P}L_{Q}\,(\sqrt{x}_{\infty}+c_{\infty})\,
                  \sqrt{x}_{\infty}\Bigr\}\\[6pt]
&\overset{~\ref{lem:x-upper-sup}}{\ge}
    \frac{1}{1-\gamma}\Bigl\{
     \Delta Q
     -C\,L_{P}L_{Q}\,\Bigl[(c_{\infty}+\sqrt{c_{\infty}^{2}+\Delta L_{\infty}^{BC}})^2 + c_{\infty}\sqrt{c_{\infty}^{2}+\Delta L_{\infty}^{BC}} + c_{\infty}^2 \Bigr] \Bigr\}\\[6pt]
&= \frac{1}{1-\gamma}\Bigl\{
     \Delta Q
     -C\,L_{P}L_{Q}\,(3c_{\infty}^2+3c\sqrt{c_{\infty}^{2}+\Delta L_{\infty}^{BC}} + \Delta L_{\infty}^{BC}) \Bigr\}.         
\end{aligned}
\end{equation}

\noindent
Thus, the one–step performance satisfies the lower bound
\begin{equation}
\begin{aligned}
\label{eq:per-lower-bound}
&J(\pi_{t+1})-J(\pi_t)
\;\ge\;\frac{1}{1-\gamma}\Bigl(
     \Delta Q
     -\kappa(3c_{\infty}^2+3c\sqrt{c_{\infty}^{2}+\Delta L_{\infty}^{BC}} + \Delta L_{\infty}^{BC})
\Bigr)\!, \\
& \kappa\coloneqq C\,L_{P}L_{Q}.  
\end{aligned}
\end{equation}
The proof of Proposition \ref{prop:one-step-lower} is finished.
\end{proof}

\subsection{Proof of Theorem~\ref{thm:single-step-alg1}}\label{para:A3}
We now show how our outer‐loss components ensure the performance lower bound \eqref{eq:per-lower-bound} is maintained.

$\mathcal L_1$ \eqref{eq:L1} updates $\alpha$ based on the relative gradients of Q-value and the BC loss.  Under the initialization assumption $\nabla_\theta \mathbb E[Q] > \nabla_\theta L_{BC}$, so $\mathcal L_1$ updates $\alpha$ to favor Q‐improvement. 

In our algorithm, the two regularizers $\mathcal L_2$ and $\mathcal L_3$ play complementary roles in guaranteeing safe single‐step improvements.  Specifically, $\mathcal L_2$ in \eqref{eq:L2} penalizes the squared change in the Q‐function, $\Delta Q^2$, to prevent overly large and unreliable Q‐updates. Due to the bootstrapping error inherent in RL, the single-step Q-value changes can be noisy, and therefore we apply an exponential moving average (EMA) for stabilization. In order to preserve the one‐step performance lower bound \eqref{eq:per-lower-bound}, $\mathcal L_3$ in \eqref{eq:L3} must impose a matching penalty on the bias term identified in that bound.  By choosing $\mathcal L_3$ so that its curvature mirrors that of $\mathcal L_2$, we ensure the single‐step performance guarantee remains non‐negative.  
\begin{proof}
We perform a second‐order Taylor expansion of 
\(\sqrt{c_{\infty}^{2}+\Delta L_{\infty}^{BC}}\) around \(\Delta L_{\infty}^{BC}=0\),
assuming \(\Delta L_{\infty}^{BC}/c_{\infty}^{2}\ll1\), discarding higher-order and constant terms. Substituting into the square and retaining only terms up to \(O(\Delta L_{\infty}^{BC})\)
yields:
\begin{equation}
\begin{aligned}
\mathcal L_3
&=\;\kappa^{2}\Bigl(3c_{\infty}^{2}
      +3c_{\infty}\sqrt{c_{\infty}^{2}+\Delta L_{\infty}^{BC}}
      +\Delta L_{\infty}^{BC}\Bigr)^{2}\\[4pt]
&=\;\kappa^{2}\Bigl(3c_{\infty}^{2}
      +3c_{\infty}\Bigl(c_{\infty}
         +\tfrac{\Delta L_{\infty}^{BC}}{2c_{\infty}}
         -\tfrac{(\Delta L_{\infty}^{BC})^{2}}{8c_{\infty}^{3}}
         +O(\Delta L_{\infty}^{{BC}^{3}})\Bigr)
      +\Delta L_{\infty}^{BC}\Bigr)^{2}\\[4pt]
&=\;\kappa^{2}\Bigl(6c_{\infty}^{2}
      +\tfrac{5}{2}\,\Delta L_{\infty}^{BC}
      -\tfrac{3}{8}\,\frac{(\Delta L_{\infty}^{BC})^{2}}{c_{\infty}^{2}}
      +O(\Delta L_{\infty}^{{BC}^{3}})\Bigr)^{2}\\[4pt]
&=\;\kappa^{2}\Bigl(36\,c_{\infty}^{4}
      +30\,c_{\infty}^{2}\,\Delta L_{\infty}^{BC}
      +O(\Delta L_{\infty}^{{BC}^{2}})\Bigr)\\[4pt]
&=\;36\,\kappa^{2}\,c_{\infty}^{4}
  \;+\;30\,\kappa^{2}\,c_{\infty}^{2}\,\Delta L_{\infty}^{BC}
  \;+\;O(\Delta L_{\infty}^{{BC}^{2}})\\[4pt]
&\approx\;30\,\kappa^{2}\,c_{\infty}^{2}\,\Delta L_{\infty}^{BC}\\[4pt]
&=wc_{\infty}^{2}\,\Delta L_{\infty}^{BC},  \quad  w\coloneqq 30k^2.
\end{aligned}
\end{equation}
In practice, we scale \(\mathcal L_3\) by the value of \(\mathcal L_2\) to match its regularization strength and simply set $w$ to 1:
\begin{equation}
\mathcal L_3 \;=\;(\Delta Q)^2\,c_{\infty}^{2}\,\Delta L_{\infty}^{BC}.
\end{equation}
By setting an appropriate $w$, the algorithm can guarantee that:
\begin{equation}\label{eq:final_guarantee}
J(\pi_{t+1}) - J(\pi_t) \ge 0.
\end{equation}
The proof of Theorem \ref{thm:single-step-alg1} is finished. 
\end{proof}

\subsection{Proof of Theorem~\ref{thm:gap-compact}}\label{para:A4}
\begin{proof}
We split the total performance gap into two components:
\begin{equation}\label{eq:init_and_accum}
\begin{aligned}
J(\pi^*) - J(\pi_T)
&= \bigl[J(\pi^*) - J(\pi_0)\bigr]
   - \bigl[J(\pi_1) - J(\pi_0)\bigr]
   - \bigl[J(\pi_2) - J(\pi_1)\bigr]
   - \cdots
   - \bigl[J(\pi_T) - J(\pi_{T-1})\bigr] \\[4pt]
&= J(\pi^*) - J(\pi_0)
   - \sum_{i=0}^{T-1}\bigl[J(\pi_{i+1}) - J(\pi_i)\bigr].
\end{aligned}
\end{equation}
We first observe that the behavior‐cloning loss
\begin{equation}\label{eq:t0_condition}
L_{t}^{BC}
= \mathbb E_{(s,a)\sim D}\bigl\|\pi_t(s)-a\bigr\|^2
\end{equation}
decreases rapidly during early training.  Hence there exists a warm‐up time $t_0$ such that
\begin{equation}
L_{t_0}^{BC} \le \varepsilon_0
\quad\Longrightarrow\quad
\mathbb E_{s\sim D}\|\pi_{t_0}(s)-\beta(s)\|\le\sqrt{\varepsilon_0},
\end{equation}
and we set
\[
\pi_0 := \pi_{t_0}\approx\beta.
\]
then
\begin{equation}
\begin{aligned}
J(\pi^*) - J(\pi_0)
&= \frac{1}{1-\gamma}\Bigl(\,
    \mathbb E_{s\sim d_{\pi^*}}[\,r(s)\,]
  - \mathbb E_{s\sim d_{\pi_0}}[\,r(s)\,]\Bigr)\\[6pt]
&= \frac{1}{1-\gamma}
   \int_{s}\bigl(d_{\pi^*}(s)-d_{\pi_0}(s)\bigr)\,r(s)\,\mathrm{d}s\\[6pt]
&\le \frac{1}{1-\gamma}
   \int_{s}\bigl|d_{\pi^*}(s)-d_{\pi_0}(s)\bigr|\;R_{\max}\,\mathrm{d}s\\[6pt]
&= \frac{R_{\max}}{1-\gamma}\,
   \|\,d_{\pi^*}-d_{\pi_0}\|_{1}\\[6pt]
& \overset{\ref{lem:dist-shift-l1}}{\le} \frac{R_{\max}}{1-\gamma}\,
   C\,L_{P}\,\max_{s}\|\pi^*(s)-\pi_0(s)\|\\[6pt]
&\le \frac{C\,L_P\,R_{\max}}{1-\gamma}\,
\Bigl(
  \underbrace{\max_{s}\|\pi^*(s)-\beta(s)\|}_{\varepsilon_\beta}
  \;+\; \mathbb E_{s\sim D}\|\pi_{0}(s)-\beta(s)\|
\Bigr).\\[6pt]
&\le \frac{C\,L_{P}\,R_{\max}}{1-\gamma}\,
   \bigl(\varepsilon_{\beta} + \sqrt{\varepsilon_{0}}\bigr).
\end{aligned}
\end{equation}
We define
\begin{equation}
    \Delta_0 = \frac{C\,L_{P}\,R_{\max}}{1-\gamma}\,
   \bigl(\varepsilon_{\beta} + \sqrt{\varepsilon_{0}}\bigr)
\end{equation}
Next, each one‐step update $i$ produces the gain \eqref{eq:final_guarantee}
\begin{equation}\label{eq:one_step_gain}
\delta_i
= J(\pi_{i+1}) - J(\pi_i)
\;\ge\;0.
\end{equation}
Summing these gains yields the unified bound
\begin{equation}\label{eq:gap_bound_unified}
J(\pi^*) - J(\pi_T)
\;\le\;
\Delta_0
\;-\;
\sum_{i=0}^{T-1}\delta_i.
\end{equation}
With a fixed regularization weight $\alpha$, the sequence $\{\delta_i\}$ 
tends to decay rapidly toward zero or even become negative.  Therefore, static $\alpha$ leaves a large residual gap in \eqref{eq:gap_bound_unified}. Our meta-update dynamically adjusts $\alpha$ so that each \(\delta_i\) stays bounded below by a positive constant $\delta_{\min}>0$ over a long horizon. Thus
\begin{equation}\label{eq:final_gap_bound}
J(\pi^*) - J(\pi_T)
\;\le\;
\Delta_0 \;-\;T\,\delta_{\min}.
\end{equation}
The proof of Theorem \ref{thm:gap-compact} is finished. 
\end{proof}

\section{Experimental Details}
\label{sec:ex_details}
\subsection{Hardware and Software}
We use the following hadrward: 
\begin{enumerate}[1)]
  \item Intel(R) Xeon(R) Platinum 8352V CPU @ 2.10\,GHz
  \item NVIDIA GeForce RTX 4090 GPU
\end{enumerate}
We use the following software versions:
\begin{enumerate}[1)]
  \item Python 3.8.10
  \item D4RL 1.1
  \item MuJoCo 3.2.3
  \item Gym 0.23.1
  \item mujoco-py 2.1.2.14
  \item PyTorch 2.2.2 + CUDA 12.1
  \item TorchOpt 0.7.3~\cite{JMLR:TorchOpt}
\end{enumerate}


\subsection{Hyperparameters}
The network structures and hyperparameter configurations of each algorithm corresponding to Table \ref{tab:performance_comparison} are as follows. 
\label{para:hyperparameters}
\begin{table}[htbp]
  \centering
  \caption{ASPC hyperparameters.}
  \label{tab:aspc-hyperparams}
  \resizebox{0.55\linewidth}{!}{\begin{tabular}{llc}
    \toprule
    & \textbf{Hyperparameter} & \textbf{Value} \\
    \midrule
    \multirowcell{9}{TD3+BC\\hyperparameters}
    & Optimizer               & Adam \cite{kingma2014adam} \\
    & Critic learning rate    & 3e-4 \\
    & Actor learning rate     & 3e-4 \\
    & Mini-batch size         & 256 \\
    & Discount factor         & 0.99 \\
    & Target update rate      & 5e-3 \\
    & Policy noise            & 0.2 \\
    & Policy noise clipping   & (-0.5, 0.5) \\
    & Policy update frequency & 2 \\
    \midrule
    \multirowcell{7}{Architecture}
    & Critic hidden dim       & 256 \\
    & Critic hidden layers    & 3 \\
    & Critic activation function & ReLU \\
    & Critic LayerNorm        & True \\
    & Actor hidden dim        & 256 \\
    & Actor hidden layers     & 2 \\
    & Actor activation function & ReLU \\
    \midrule
    \multirowcell{5}{ASPC\\hyperparameters}
    & Initial $\alpha$              & 2.5 \\
    & $\alpha$ learning rate        & 2e-3 \\
    & $\alpha$ learning rate decay  & Exponential \\
    & $\alpha$ update interval      & 10 \\
    & EMA smoothing factor          & 0.995 \\
    \bottomrule
  \end{tabular}
  }
\end{table}

\begin{table}[htbp]
  \centering
  \caption{TD3+BC hyperparameters.}
  \label{tab:td3bc-hyperparams}
  \resizebox{0.55\linewidth}{!}{\begin{tabular}{llc}
    \toprule
    & \textbf{Hyperparameter} & \textbf{Value} \\
    \midrule
    \multirowcell{10}{TD3+BC\\hyperparameters}
    & Optimizer               & Adam \cite{kingma2014adam} \\
    & Critic learning rate    & 3e-4 \\
    & Actor learning rate     & 3e-4 \\
    & Mini-batch size         & 256 \\
    & Discount factor         & 0.99 \\
    & Target update rate      & 5e-3 \\
    & Policy noise            & 0.2 \\
    & Policy noise clipping   & (-0.5, 0.5) \\
    & Policy update frequency & 2 \\
    & $\alpha$                & 2.5 \\
    \midrule
    \multirowcell{7}{Architecture}
    & Critic hidden dim       & 256 \\
    & Critic hidden layers    & 3 \\
    & Critic activation function & ReLU \\
    & Critic LayerNorm        & True \\
    & Actor hidden dim        & 256 \\
    & Actor hidden layers     & 2 \\
    & Actor activation function & ReLU \\
    \bottomrule
  \end{tabular}
  }
\end{table}

\begin{table}[htbp]
  \centering
  \caption{wPC hyperparameters.}
  \label{tab:wpc-hyperparams}
  \resizebox{0.55\linewidth}{!}{\begin{tabular}{llc}
    \toprule
    & \textbf{Hyperparameter} & \textbf{Value} \\
    \midrule
    \multirowcell{11}{wPC\\hyperparameters}
    & Optimizer               & Adam \cite{kingma2014adam} \\
    & Critic learning rate    & 3e-4 \\
    & Actor learning rate     & 3e-4 \\
    & Value learning rate     & 3e-4 \\
    & Mini-batch size         & 256 \\
    & Discount factor         & 0.99 \\
    & Target update rate      & 5e-3 \\
    & Policy noise            & 0.1 \\
    & Policy noise clipping   & (-0.5, 0.5) \\
    & Policy update frequency & 2 \\
    & $\alpha$                & 2.5 \\
    \midrule
    \multirowcell{10}{Architecture}
    & Critic hidden dim           & 256 \\
    & Critic hidden layers        & 3 \\
    & Critic activation function  & ReLU \\
    & Critic LayerNorm            & True \\
    & Actor hidden dim            & 256 \\
    & Actor hidden layers         & 2 \\
    & Actor activation function   & ReLU \\
    & Value hidden dim            & 256 \\
    & Value hidden layers         & 2 \\
    & Value activation function   & ReLU \\
    \bottomrule
  \end{tabular}
  }
\end{table}

\begin{table}[htbp]
  \centering
  \caption{A2PR hyperparameters.}
  \label{tab:a2pr-hyperparams}
  \resizebox{0.55\linewidth}{!}{\begin{tabular}{llc}
    \toprule
    & \textbf{Hyper-parameters} & \textbf{Value} \\
    \midrule
    \multirowcell{10}{TD3+BC\\hyperparameters}
    & Optimizer                            & Adam \cite{kingma2014adam} \\
     & Critic learning rate      & 3e-4 \\
     & Actor learning rate                  & 3e-4 \\
     & Mini-batch size                      & 256 \\
     & Discount factor                       & 0.99 \\
    & Target update rate $\tau$            & 5e-3 \\
    & Policy noise                         & 0.2 \\
    & Policy noise clipping                & (-0.5, 0.5) \\
    & Policy update frequency              & 2 \\
    & $\alpha$                  & 2.5 \\
    
    \midrule
    \multirowcell{9}{Architecture}
    & Q-Critic hidden dim                  & 256 \\
    & Q-Critic hidden layers               & 3 \\
    & Q-Critic Activation function         & ReLU \\
    & V-Critic hidden dim                  & 256 \\
    & V-Critic hidden layers               & 3 \\
    & V-Critic Activation function         & ReLU \\
    & Actor hidden dim                     & 256 \\
    & Actor hidden layers                  & 2 \\
    & Actor Activation function            & ReLU \\
    
    \midrule
    \multirowcell{3}{A2PR\\hyperparameters}
    & Normalized state                     & True \\
    & $\epsilon_A$                         & 0 \\
    & $w_1$, $w_2$           & 1.0 \\
    \bottomrule
  \end{tabular}
  }
\end{table}

\begin{table}[htbp]
  \centering
  \caption{IQL hyperparameters.}
  \label{tab:iql-hyperparams}
  \resizebox{0.55\linewidth}{!}{\begin{tabular}{llc}
    \toprule
    & \textbf{Hyperparameter} & \textbf{Value} \\
    \midrule
    \multirowcell{8}{IQL\\hyperparameters}
    & Optimizer                & Adam \cite{kingma2014adam} \\
    & Critic learning rate     & 3e-4 \\
    & Actor learning rate      & 3e-4 \\
    & Value learning rate      & 3e-4 \\
    & Mini-batch size          & 256 \\
    & Discount factor          & 0.99 \\
    & Target update rate       & 5e-3 \\
    & Learning rate decay      & Cosine \\
    \midrule
    \multirowcell{7}{Architecture}
    & Critic hidden dim         & 256 \\
    & Critic hidden layers      & 2 \\
    & Critic activation function& ReLU \\
    & Actor hidden dim          & 256 \\
    & Actor hidden layers       & 2 \\
    & Actor activation function & ReLU \\
    & Value hidden dim          & 256 \\
    & Value hidden layers       & 2 \\
    & Value activation function & ReLU \\
    \bottomrule
  \end{tabular}
  }
\end{table}

\begin{table}[htbp]
  \centering
  \caption{IQL’s best hyperparameters used in D4RL benchmark.}
  \label{tab:iql-d4rl-best}
  \resizebox{0.6\linewidth}{!}{\begin{tabular}{lccc}
    \toprule
    \textbf{Task Name} & $\boldsymbol{\beta}$ & $\textbf{IQL } \boldsymbol{\tau}$ & \textbf{Deterministic policy} \\
    \midrule
    halfcheetah-random            & 3.0  & 0.95 & False \\
    halfcheetah-medium            & 3.0  & 0.95 & False \\
    halfcheetah-expert            & 6.0  & 0.9  & False \\
    halfcheetah-medium-expert     & 3.0  & 0.7  & False \\
    halfcheetah-medium-replay     & 3.0  & 0.95 & False \\
    halfcheetah-full-replay       & 1.0  & 0.7  & False \\
    \midrule
    hopper-random                 & 1.0  & 0.95 & False \\
    hopper-medium                 & 3.0  & 0.7  & \textbf{True} \\
    hopper-expert                 & 3.0  & 0.5  & False \\
    hopper-medium-expert          & 6.0  & 0.7  & False \\
    hopper-medium-replay          & 6.0  & 0.7  & \textbf{True} \\
    hopper-full-replay            & 10.0 & 0.9  & False \\
    \midrule
    walker2d-random               & 0.5  & 0.9  & False \\
    walker2d-medium               & 6.0  & 0.5  & False \\
    walker2d-expert               & 6.0  & 0.9  & False \\
    walker2d-medium-expert        & 1.0  & 0.5  & False \\
    walker2d-medium-replay        & 0.5  & 0.7  & False \\
    walker2d-full-replay          & 1.0  & 0.7  & False \\
    \midrule
    maze2d-umaze                  & 3.0  & 0.7  & False \\
    maze2d-medium                 & 3.0  & 0.7  & False \\
    maze2d-large                  & 3.0  & 0.7  & False \\
    \midrule
    antmaze-umaze                 & 10.0 & 0.7  & False \\
    antmaze-umaze-diverse         & 10.0 & 0.95 & False \\
    antmaze-medium-play           & 6.0  & 0.9  & False \\
    antmaze-medium-diverse        & 6.0  & 0.9  & False \\
    antmaze-large-play            & 10.0 & 0.9  & False \\
    antmaze-large-diverse         & 6.0  & 0.9  & False \\
    \midrule
    pen-human                     & 1.0  & 0.95 & False \\
    pen-cloned                    & 10.0 & 0.9  & False \\
    pen-expert                    & 10.0 & 0.8  & False \\
    \midrule
    door-human                    & 0.5  & 0.9  & False \\
    door-cloned                   & 6.0  & 0.7  & False \\
    door-expert                   & 0.5  & 0.7  & False \\
    \midrule
    hammer-human                  & 3.0  & 0.9  & False \\
    hammer-cloned                 & 6.0  & 0.7  & False \\
    hammer-expert                 & 0.5  & 0.95 & False \\
    \midrule
    relocate-human                & 1.0  & 0.95 & False \\
    relocate-cloned               & 6.0  & 0.9  & False \\
    relocate-expert               & 10.0 & 0.9  & False \\
    \bottomrule
  \end{tabular}
  }
\end{table}

\begin{table}[htbp]
  \centering
  \caption{ReBRAC hyperparameters.}
  \label{tab:rebrac-hyperparams}
  \resizebox{0.6\linewidth}{!}{\begin{tabular}{llc}
    \toprule
    & \textbf{Hyperparameter} & \textbf{Value} \\
    \midrule
    \multirowcell{7}{ReBRAC\\hyperparameters}
    & Optimizer                    & Adam \cite{kingma2014adam} \\
    & Mini-batch size             & \makecell[l]{1024 on Gym-MuJoCo,\\ 256 on others} \\
    & Learning rate               & \makecell[l]{1e-3 on Gym-MuJoCo,\\ 1e-4 on AntMaze} \\
    & Discount factor $\gamma$   & \makecell[l]{0.999 on AntMaze,\\ 0.99 on others} \\
    & Target update rate $\tau$  & 5e-3 \\
    \midrule
    \multirowcell{4}{Architecture}
    & Hidden dim (all networks)         & 256 \\
    & Hidden layers (all networks)      & 3 \\
    & Activation function               & ReLU \\
    & Critic LayerNorm                    & True \\ 
    \bottomrule
  \end{tabular}
  }
\end{table}

\begin{table}[htbp]
  \centering
  \caption{ReBRAC’s best hyperparameters used in D4RL benchmark.}
  \label{tab:rebrac-best-hyperparams}
  \resizebox{0.55\linewidth}{!}{\begin{tabular}{lcc}
    \toprule
    \textbf{Task Name} & $\boldsymbol{\beta_1}$ (actor) & $\boldsymbol{\beta_2}$ (critic) \\
    \midrule
    halfcheetah-random           & 0.001 & 0.1   \\
    halfcheetah-medium           & 0.001 & 0.01  \\
    halfcheetah-expert           & 0.01  & 0.01  \\
    halfcheetah-medium-expert    & 0.01  & 0.1   \\
    halfcheetah-medium-replay    & 0.01  & 0.001 \\
    halfcheetah-full-replay      & 0.001 & 0.1   \\
    \midrule
    hopper-random                & 0.001 & 0.01  \\
    hopper-medium                & 0.01  & 0.001 \\
    hopper-expert                & 0.1   & 0.001 \\
    hopper-medium-expert         & 0.1   & 0.01  \\
    hopper-medium-replay         & 0.05  & 0.5   \\
    hopper-full-replay           & 0.01  & 0.01  \\
    \midrule
    walker2d-random              & 0.01  & 0.0   \\
    walker2d-medium              & 0.05  & 0.1   \\
    walker2d-expert              & 0.01  & 0.5   \\
    walker2d-medium-expert       & 0.01  & 0.01  \\
    walker2d-medium-replay       & 0.05  & 0.01  \\
    walker2d-full-replay         & 0.01  & 0.01  \\
    \midrule
    maze2d-umaze                 & 0.003 & 0.001 \\
    maze2d-medium                & 0.003 & 0.001 \\
    maze2d-large                 & 0.003 & 0.001 \\
    \midrule
    antmaze-umaze                & 0.003 & 0.002 \\
    antmaze-umaze-diverse        & 0.003 & 0.001 \\
    antmaze-medium-play          & 0.001 & 0.0005 \\
    antmaze-medium-diverse       & 0.001 & 0.0   \\
    antmaze-large-play           & 0.002 & 0.001 \\
    antmaze-large-diverse        & 0.002 & 0.002 \\
    \midrule
    pen-human                    & 0.1   & 0.5   \\
    pen-cloned                   & 0.05  & 0.5   \\
    pen-expert                   & 0.01  & 0.01  \\
    \midrule
    door-human                   & 0.1   & 0.1   \\
    door-cloned                  & 0.01  & 0.1   \\
    door-expert                  & 0.05  & 0.01  \\
    \midrule
    hammer-human                 & 0.01  & 0.5   \\
    hammer-cloned                & 0.1   & 0.5   \\
    hammer-expert                & 0.01  & 0.01  \\
    \midrule
    relocate-human               & 0.1   & 0.01  \\
    relocate-cloned              & 0.1   & 0.01  \\
    relocate-expert              & 0.05  & 0.01  \\
    \bottomrule
  \end{tabular}
  }
\end{table}

\vspace{20pt}
\section{Learning Curves}
\subsection{Scale Factor Curves}
Figure \ref{fig:alpha_curves_all} plots the $\alpha$ learning curves for all 39 datasets. The curves show that our algorithm (i) drives $\alpha$ toward distinct optima across tasks and (ii) merely modulates its step size and pace when the dataset quality changes within the same task. This dual behaviour highlights the method’s adaptability to both task differences and data-quality variations.

\begin{figure*}[!htb]
    \centering
    \includegraphics[width=\linewidth]{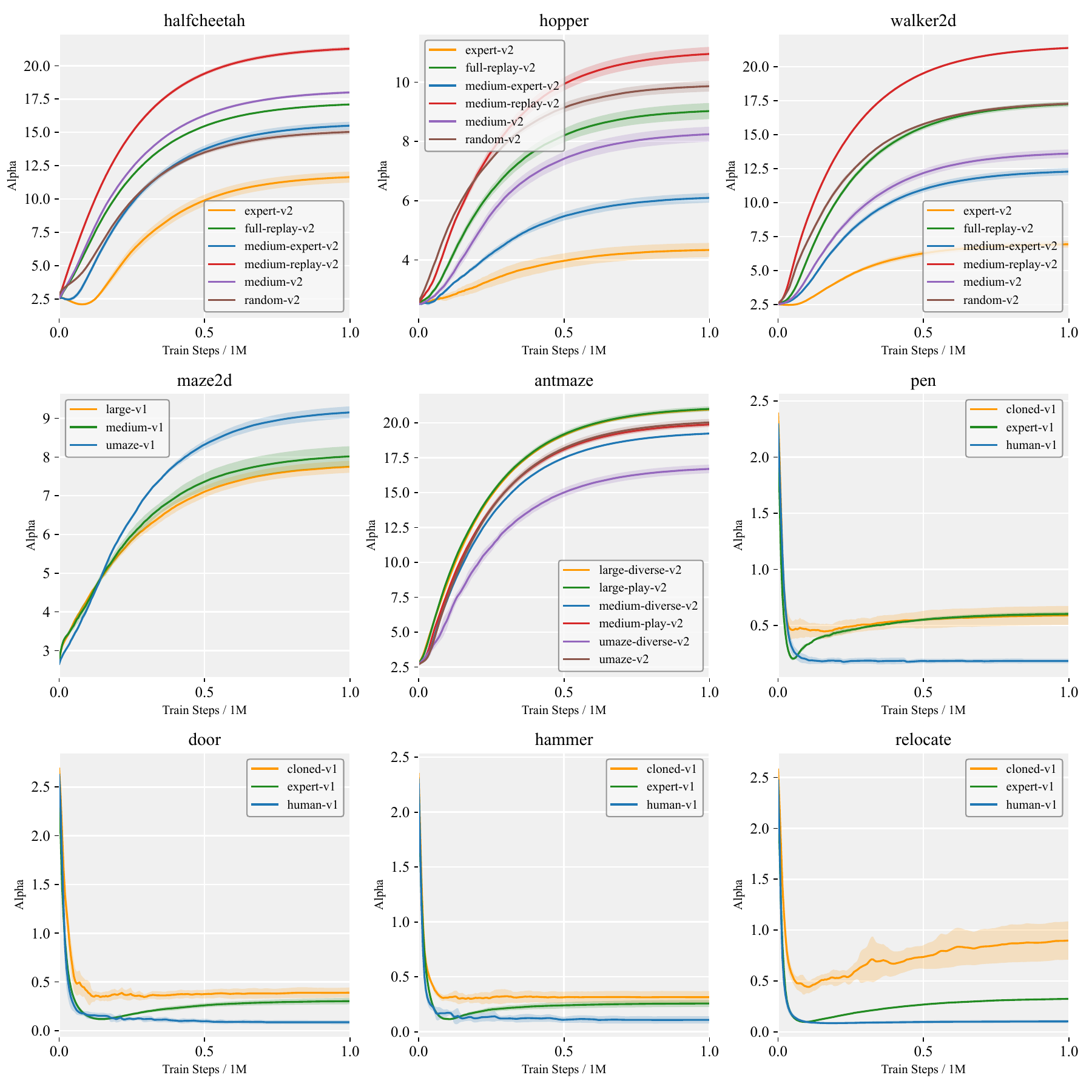}
    \caption{Learning curves of $\alpha$ for nine tasks across 39 datasets.}
    \label{fig:alpha_curves_all}
\end{figure*}

\subsection{Performance Curves}
Figure~\ref{fig:perf_all} shows the learning curves of all four algorithms on the 39 D4RL datasets.  
ASPC rises much more rapidly than the baselines, typically within the first \(0.2\text{--}0.3\) M environment steps, and surpasses them long before the others stabilize. Its final normalized scores are almost always the highest (or very close to the highest) across all task families, maintaining a clear margin where the competing methods usually plateau. Moreover, the shaded regions (mean \(\pm\) 1 s.d.\ over four seeds) remain consistently narrow for ASPC, and its curves show no late-stage collapses, pointing to lower variance and steadier adaptation across widely varying task dynamics and data quality.  
Overall, the figure suggests that ASPC combines greater sample efficiency, stronger ultimate performance, and more reliable behavior than the other approaches.

\begin{figure*}[h]
    \centering
    \includegraphics[width=\linewidth]{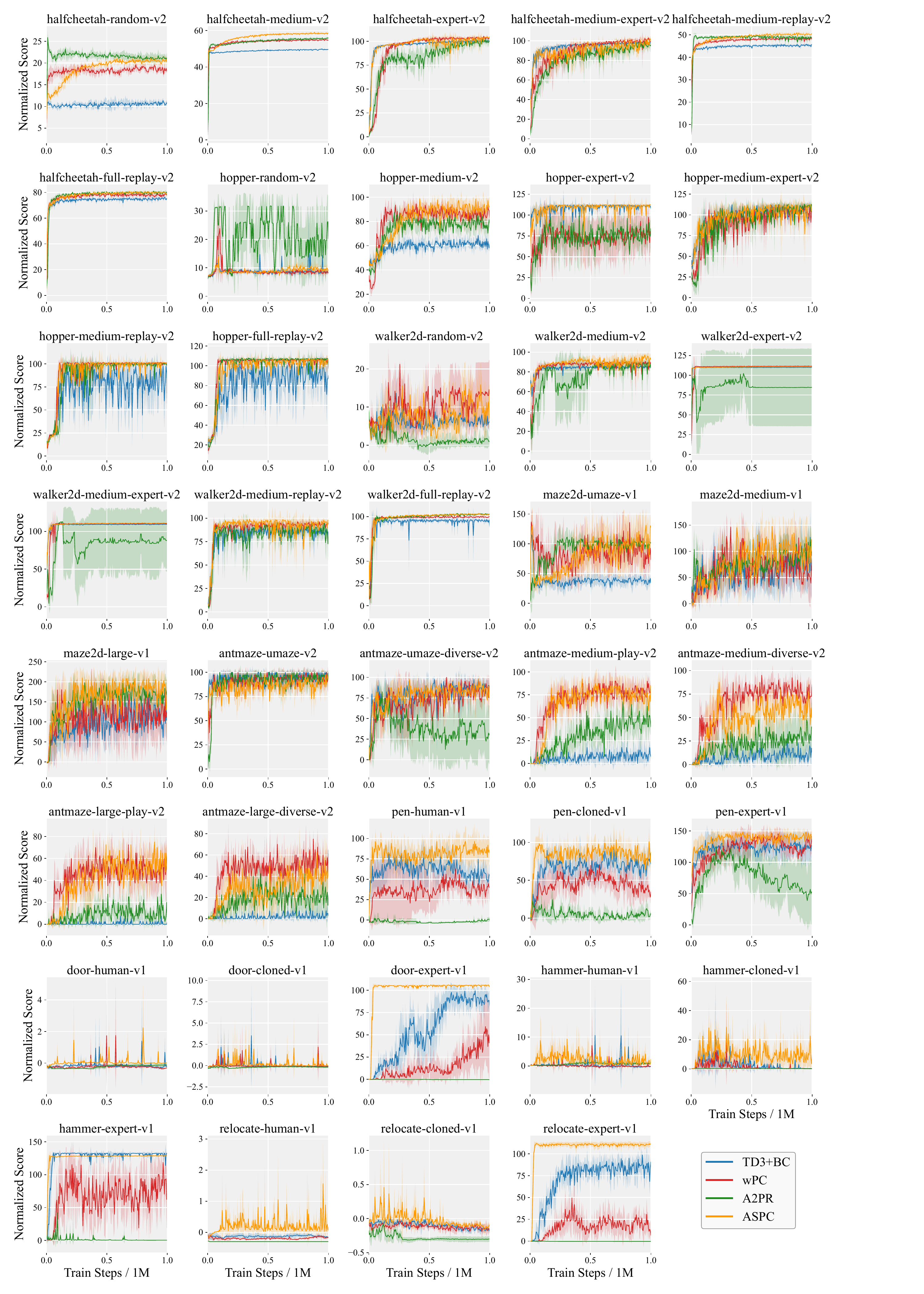}
    \caption{Learning curves comparing the performance of ASPC against other baselines.}
    \label{fig:perf_all}
\end{figure*}

\vspace{20pt}
\section{Integrating ASPC with Other Offline RL Algorithms}
\label{appendix:aspc-integration}
\subsection{Integration with IQL}
\label{appendix:iql}
In IQL, the policy is trained by advantage-weighted behavior cloning.
Let $\mathrm{adv}(s,a)$ denote the IQL advantage estimate and
$\beta>0$ the temperature parameter. We integrate ASPC by treating
$\beta$ as the adaptive policy-constraint coefficient.
\paragraph{Inner objective.}
The IQL actor minimizes
\begin{equation}
\label{eq:iql-inner}
\mathcal L_{\mathrm{inner}}^{\mathrm{IQL}}(\theta;\beta)
=
\mathbb{E}_{(s,a)\sim\mathcal D}\Bigl[
    \exp(\beta\,\mathrm{adv}(s,a))\;
    \ell_{\mathrm{BC}}(\pi_\theta(a\mid s))
\Bigr],
\end{equation}
where $\ell_{\mathrm{BC}}(\pi_\theta(a\mid s))=-\log\pi_\theta(a\mid s)$.
A single gradient step yields the updated policy
$\pi_{\tilde\theta(\beta)}$.
\paragraph{Outer objective.}
Following ASPC, we construct an outer loss on the updated policy using a
normalized Q-improvement term and the corresponding BC loss:
\begin{equation}
\label{eq:iql-L1}
\mathcal L_1^{\mathrm{IQL}}(\beta)
=
-\,
\frac{
\mathbb{E}_{s}[\,Q(s,\pi_{\tilde\theta}(s))\,]
}{
\mathbb{E}_{s}[\,|Q(s,\pi_{\tilde\theta}(s))|\,]
}
+
\mathbb{E}_{(s,a)}\!\Bigl[
    \exp(\beta\,\mathrm{adv}(s,a))\;
    \ell_{\mathrm{BC}}(\pi_{\tilde\theta}(a\mid s))
\Bigr].
\end{equation}
The second term measures the change in mean Q-value induced by the inner
update:
\begin{equation}
\label{eq:iql-L2}
\mathcal L_2^{\mathrm{IQL}}(\beta)
=
\Bigl(
\mathbb{E}_{s}[Q(s,\pi_{\tilde\theta}(s))]
-
\mathbb{E}_{s}[Q(s,\pi_{\theta}(s))]
\Bigr)^{2}.
\end{equation}
The outer objective for adapting $\beta$ is
\begin{equation}
\label{eq:iql-outer}
\mathcal L_{\mathrm{outer}}^{\mathrm{IQL}}(\beta)
=
\mathcal L_1^{\mathrm{IQL}}(\beta)
+
\mathcal L_2^{\mathrm{IQL}}(\beta).
\end{equation}
\subsection{Integration with CQL}
\label{appendix:cql}
CQL constrains Q-values by penalizing
larger Q-values on out-of-distribution (OOD) actions.
Let $\alpha>0$ denote the conservatism coefficient.  
Following ASPC, we treat $\alpha$ as the adaptive policy-constraint parameter.
\paragraph{Inner objective.}
Given a batch $(s,a,r,s')$, the CQL critic update solves
\begin{equation}
\label{eq:cql-inner}
\mathcal L_{\mathrm{inner}}^{\mathrm{CQL}}(\psi;\alpha)
=
\underbrace{
\mathbb{E}\!\left[
  \bigl(Q_\psi(s,a)-\mathcal TQ(s,a)\bigr)^{2}
\right]
}_{\text{Bellman regression}}
\;+\;
\alpha\,
\underbrace{
\Bigl(
\mathbb{E}_{a'\sim\pi(\cdot\mid s)}[Q_\psi(s,a')]
-
Q_\psi(s,a)
\Bigr)
}_{\text{CQL penalty}},
\end{equation}
where  
\[
\mathcal TQ(s,a)
=
r+\gamma\,\mathbb E_{a'\sim\pi(\cdot\mid s')}[
    \min(Q_{\psi^-}(s',a'))
].
\]
A single gradient step produces the updated critic
\(Q_{\tilde\psi(\alpha)}\).
\paragraph{Outer objective.}
ASPC evaluates the updated critic with a normalized Q-improvement term
and the corresponding CQL penalty, forming
\begin{equation}
\label{eq:cql-L1}
\mathcal L_1^{\mathrm{CQL}}(\alpha)
=
-\,
\frac{
\mathbb E_{s}[\,Q_{\tilde\psi}(s,\pi(s))\,]
}{
\mathbb E_{s}[\,|Q_{\tilde\psi}(s,\pi(s))|\,]
}
\;+\;
\alpha\,
\Bigl(
\mathbb{E}_{a'\sim\pi(\cdot\mid s)}[Q_{\tilde\psi}(s,a')]
-
\mathbb E_{(s,a)}[\,Q_{\tilde\psi}(s,a)\,]
\Bigr).
\end{equation}
The Q-value change induced by the inner update is
\begin{equation}
\label{eq:cql-L2}
\mathcal L_2^{\mathrm{CQL}}(\alpha)
=
\Bigl(
\mathbb E_{s}[Q_{\tilde\psi}(s,\pi(s))]
-
\mathbb E_{s}[Q_{\psi}(s,\pi(s))]
\Bigr)^{2}.
\end{equation}
The outer objective for adapting $\alpha$ becomes
\begin{equation}
\label{eq:cql-outer}
\mathcal L_{\mathrm{outer}}^{\mathrm{CQL}}(\alpha)
=
\mathcal L_1^{\mathrm{CQL}}(\alpha)
+
\mathcal L_2^{\mathrm{CQL}}(\alpha).
\end{equation}
\subsection{Integration with Diffusion-QL}
\label{appendix:diffusion-ql}
Diffusion-QL trains a diffusion policy by combining a behavior-cloning
loss with a normalized Q-term. Let $\eta>0$ be the coefficient controlling
the trade-off between policy improvement and imitation.
Following ASPC, we treat $\eta$ as the adaptive constraint parameter.
\paragraph{Inner objective.}
Given state--action pairs $(s,a)$, the diffusion policy $\pi_\theta$
is trained under the objective
\begin{equation}
\label{eq:dql-inner}
\mathcal L_{\mathrm{inner}}^{\mathrm{DQL}}(\theta;\eta)
=
\underbrace{
\mathbb E_{(s,a)\sim\mathcal D}
\bigl[\,
\mathcal L_{\mathrm{BC}}(\pi_\theta(s), a)
\,\bigr]
}_{\text{Diffusion behavior cloning}}
\;+\;
\eta\,
\underbrace{
\Bigl(
-\,
\frac{\mathbb E_{s}\bigl[Q(s,\pi_\theta(s))\bigr]}
       {\mathbb E_{s}\bigl[\,|Q(s,\pi_\theta(s))|\,\bigr]}
\Bigr)
}_{\text{normalized Q-improvement}}.
\end{equation}
A single gradient step produces the updated diffusion policy
$\pi_{\tilde\theta(\eta)}$.
\paragraph{Outer objective.}
ASPC evaluates the updated diffusion policy through a normalized
Q-value term and the corresponding BC term:
\begin{equation}
\label{eq:dql-L1}
\mathcal L_1^{\mathrm{DQL}}(\eta)
=
-\eta\,
\frac{
\mathbb E_{s}[\,Q(s,\pi_{\tilde\theta}(s))\,]
}{
\mathbb E_{s}[\,|Q(s,\pi_{\tilde\theta}(s))|\,]
}
\;+\;
\mathbb E_{(s,a)}\!\left[
\mathcal L_{\mathrm{BC}}\!\left(\pi_{\tilde\theta}(s), a\right)
\right].
\end{equation}
The Q-improvement induced by the inner update is captured by
\begin{equation}
\label{eq:dql-L2}
\mathcal L_2^{\mathrm{DQL}}(\eta)
=
\Bigl(
\mathbb E_{s}[Q(s,\pi_{\tilde\theta}(s))]
-
\mathbb E_{s}[Q(s,\pi_{\theta}(s))]
\Bigr)^{2}.
\end{equation}
The outer objective becomes
\begin{equation}
\label{eq:dql-outer}
\mathcal L_{\mathrm{outer}}^{\mathrm{DQL}}(\eta)
=
\mathcal L_1^{\mathrm{DQL}}(\eta)
+
\mathcal L_2^{\mathrm{DQL}}(\eta).
\end{equation}
\subsection{Integration with FQL}
\label{appendix:fql}
FQL employs two policies:  
(i) a teacher flow policy trained purely by flow-matching, and  
(ii) a student one-step flow policy trained via distillation and Q-improvement.
Only the student policy interacts with the Q-function, making it the component
that requires adaptive scaling. We integrate ASPC by treating the student’s
trade-off coefficient~$\alpha$ as the adaptive constraint parameter.
\paragraph{Teacher objective (BC Flow).}
The teacher flow policy is trained via standard flow-matching:
\begin{equation}
\label{eq:fql-teacher}
\mathcal L_{\mathrm{teacher}}
=
\mathbb E_{(s,a)}
\bigl[
\|
f_\theta(s, x_t, t) - (a - x_0)
\|^{2}
\bigr],
\end{equation}
where $x_t = (1-t)x_0 + ta$ and $f_\theta$ denotes the flow velocity network.
This loss is independent of~$\alpha$.
\paragraph{Inner objective (Student Flow).}
The student one-step policy $\pi_\theta$ predicts an action in a single step
and matches the teacher via a distillation loss, while also incorporating
a normalized Q-improvement term. The inner objective is
\begin{equation}
\label{eq:fql-inner}
\mathcal L_{\mathrm{inner}}^{\mathrm{FQL}}(\theta;\alpha)
=
\underbrace{
\mathbb E_{s, \varepsilon}
\bigl[
\|\pi_\theta(s,\varepsilon)
      - \pi_{\mathrm{teacher}}(s,\varepsilon)\|^{2}
\bigr]
}_{\text{distillation (BC) term}}
\;+\;
\alpha
\underbrace{
\Bigl(
-\,
\frac{
\mathbb E_{s}[Q(s,\pi_\theta(s))]
}{
\mathbb E_{s}[\,|Q(s,\pi_\theta(s))|\,]
}
\Bigr)
}_{\text{normalized Q-improvement}}.
\end{equation}
A single gradient update produces the updated student policy
$\pi_{\tilde\theta(\alpha)}$.
\paragraph{Outer objective.}
ASPC evaluates the updated student policy by combining its normalized Q-value and distillation loss, and the Q-improvement incurred by the inner update:
\begin{align}
\label{eq:fql-L1}
\mathcal L_1^{\mathrm{FQL}}(\alpha)
&=
-\alpha\,
\frac{
\mathbb E_{s}[Q(s,\pi_{\tilde\theta}(s))]
}{
\mathbb E_{s}[\,|Q(s,\pi_{\tilde\theta}(s))|\,]
}
+
\mathbb E_{s,\varepsilon}
\bigl[
\|\pi_{\tilde\theta}(s,\varepsilon)
      - \pi_{\mathrm{teacher}}(s,\varepsilon)\|^{2}
\bigr], \\[4pt]
\label{eq:fql-L2}
\mathcal L_2^{\mathrm{FQL}}(\alpha)
&=
\Bigl(
\mathbb E_{s}[Q(s,\pi_{\tilde\theta}(s))]
-
\mathbb E_{s}[Q(s,\pi_{\theta}(s))]
\Bigr)^{2}.
\end{align}
The outer objective becomes
\begin{equation}
\label{eq:fql-outer}
\mathcal L_{\mathrm{outer}}^{\mathrm{FQL}}(\alpha)
=
\mathcal L_1^{\mathrm{FQL}}(\alpha)
+
\mathcal L_2^{\mathrm{FQL}}(\alpha).
\end{equation}
\section{Additional Empirical Analyses}
\label{appendix:additional-experiments}
\subsection{Performance on AntMaze and Adroit}
\label{appendix:antmaze-adroit}
As shown in Table~\ref{tab:performance_comparison}, ASPC does not achieve state-of-the-art performance on
AntMaze and Adroit. Both benchmarks are characterized by
extremely sparse rewards, with AntMaze in particular using a
binary $0$--$1$ success signal \cite{fu2020d4rl}. In such settings, even online RL methods struggle to learn effectively, and behavior cloning plays a dominant role in determining policy quality.

Although ASPC can adapt $\alpha$ toward a more BC-dominated regime, the current BC term imitates all actions in the dataset, including
suboptimal or unsuccessful trajectories. This limits the attainable
performance on sparse-reward tasks. To address this issue, we experimented with augmenting the BC term using advantage-weighted behavior cloning, where high-advantage samples receive larger weights. The modified loss improves the selectivity of imitation by
emphasizing demonstrably good behaviors. Experimental results, shown in Figure~\ref{fig:antmaze-adroit-aw}, indicate consistent performance gains on both
AntMaze and Adroit when advantage-weighting is applied. This suggests that selectively imitating high-quality behaviors is crucial for sparse-reward offline RL tasks.
\begin{figure*}[htbp]
    \centering
    \includegraphics[width=\linewidth]{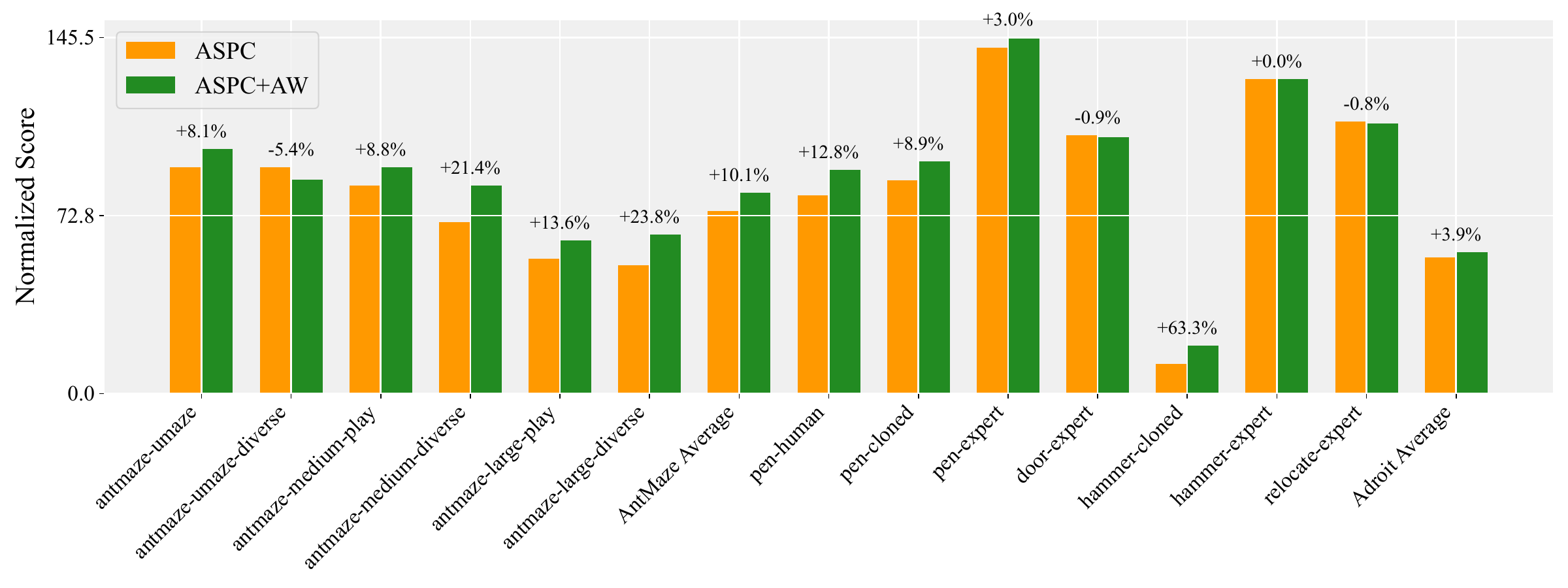}
    \caption{
    Normalized scores on AntMaze and Adroit tasks.
    Each pair of bars corresponds to a single dataset (plus the domain-wise
    average), comparing ASPC (orange) and ASPC+AW
    (green), where ASPC+AW applies advantage-weighted behavior cloning.
    The percentages annotated above the green bars indicate the relative performance change of ASPC+AW with respect to ASPC on each task.
    }
    \label{fig:antmaze-adroit-aw}
\end{figure*}
\subsection{Ablation on the Formulation of the $L_3$ Term}
\label{appendix:l3-ablation}
To better understand the role of each component in the $L_3$ term, 
we consider five variants.
The first variant keeps only the third component:
\[
L_3^{(1)} = 
\sup_{(s,a)\in\mathcal D}
\left|
\|\pi_{\tilde\theta}(s)-a\|^2
-
\|\pi_{\theta}(s)-a\|^2
\right|.
\]
The second variant multiplies the third component by the squared BC deviation:
\[
L_3^{(2)} =
\left(\sup_{(s,a)\in\mathcal D}\|\pi_\theta(s)-a\|^2\right)
L_3^{(1)}.
\]
The third variant replaces the BC-deviation factor with the detached $L_2$ term:
\[
L_3^{(3)} =
\left(L_2\ \mathrm{detach}\right)\ 
L_3^{(1)}.
\]
The fourth variant is the complete formulation used in our method:
\[
L_3^{(4)} =
\left(L_2\ \mathrm{detach}\right)
\left(\sup_{(s,a)\in\mathcal D}\|\pi_\theta(s)-a\|^2\right)
L_3^{(1)}.
\]
The fifth variant replaces both supremum operators in $L_3^{(4)}$ by dataset expectations:
\[
L_3^{(5)} =
\left(L_2\ \mathrm{detach}\right)
\left(\mathbb{E}_{(s,a)\sim\mathcal D}\|\pi_\theta(s)-a\|^2\right)
\left|
\mathbb{E}_{(s,a)\sim\mathcal D}
\left[
\|\pi_{\tilde\theta}(s)-a\|^2
-
\|\pi_{\theta}(s)-a\|^2
\right]
\right|.
\]
Table~\ref{tab:l3-ablation} summarizes the results.
Variants (3)–(5), which include the detached $L_2$ term, provide clear gains on Maze2d and AntMaze, showing that this component is essential for these domains.
By contrast, Adroit displays only minor differences across all variants, suggesting that Q-value gradients dominate BC-related gradients there, making the precise form of $L_3$ less influential. 
Finally, variant (5) achieves a performance nearly identical to the full formulation, implying that strict worst-case bounds using the $\sup$ operator are not essential in practice.
\begin{table}[t]
\centering
\caption{
Ablation on different formulations of the $L_3$ term.
Values in parentheses denote relative change (\%) w.r.t.~the full formulation (variant 4).
Positive changes are shown in {\color{blue}{blue}}, negative in {\color{red}{red}}.
}
\small
\begin{tabular}{cccccc}
\toprule
\textbf{Formulation} 
& \textbf{Gym-MuJoCo} 
& \textbf{Maze2d} 
& \textbf{AntMaze} 
& \textbf{Adroit} 
& \textbf{Total Avg} \\
\midrule
(1) 
& 76.7 {\small(\textcolor{red}{-6.6\%})}
& 97.2 {\small(\textcolor{red}{-34.0\%})}
& 31.7 {\small(\textcolor{red}{-57.4\%})}
& 55.2 {\small(\textcolor{red}{-0.9\%})}
& 64.7 {\small(\textcolor{red}{-16.9\%})}
\\

(2) 
& 76.8 {\small(\textcolor{red}{-6.5\%})}
& 107.2 {\small(\textcolor{red}{-27.2\%})}
& 31.9 {\small(\textcolor{red}{-57.2\%})}
& 54.9 {\small(\textcolor{red}{-1.4\%})}
& 65.5 {\small(\textcolor{red}{-15.9\%})}
\\

(3) 
& 81.1 {\small(\textcolor{red}{-1.2\%})}
& 151.8 {\small(\textcolor{blue}{+3.1\%})}
& 73.3 {\small(\textcolor{red}{-1.6\%})}
& 56.1 {\small(\textcolor{blue}{+0.7\%})}
& 77.7 {\small(\textcolor{red}{-0.2\%})}
\\

\midrule 
(4) 
& 82.1
& 147.2
& 74.5
& 55.7
& 77.9
\\
\midrule 

(5) 
& 82.0 {\small(\textcolor{red}{-0.1\%})}
& 149.2 {\small(\textcolor{blue}{+1.4\%})}
& 74.6 {\small(\textcolor{blue}{+0.1\%})}
& 55.5 {\small(\textcolor{red}{-0.4\%})}
& 77.9 {\small(\textcolor{blue}{+0.0\%})}
\\

\bottomrule
\end{tabular}
\label{tab:l3-ablation}
\end{table}

\subsection{Case Study of ASPC Dynamics}
\label{appendix:case-study}
Figure~\ref{fig:case-study} shows the training dynamics on 
halfcheetah-medium-v2.  
ASPC consistently increases both the estimated Q-value and the BC loss, 
while simultaneously improving the normalized score.  
It is essential to note that the increase in BC loss under ASPC does not indicate instability or degradation. Since ASPC deliberately allows the policy to deviate from the behavior policy when such deviations yield sufficient Q-value improvement, the BC loss can increase while performance improves. This matches our theory: whenever the Q-value gain compensates for the increased deviation, the update remains beneficial. Thus, an increasing BC loss indicates that ASPC is escaping the behavior
cloning regime and moving toward higher-value actions.
In contrast, TD3+BC rapidly plateaus in all three curves, indicating that 
its fixed trade-off between RL and BC limits its ability to continue 
improving.
\begin{figure}[htbp]
    \centering
    \includegraphics[width=\linewidth]{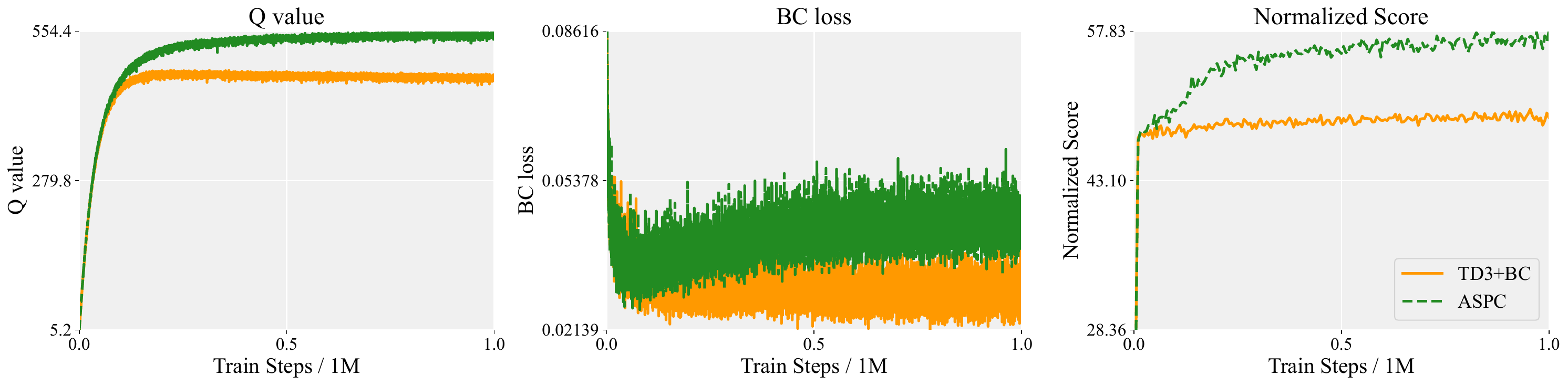}
    \caption{
    Case study on halfcheetah-medium-v2.  
    ASPC maintains increasing Q-values and BC loss throughout training,
    accompanied by continuous improvement in normalized score.
    In contrast, TD3+BC quickly saturates in all three metrics.
    This behavior is consistent with the theoretical single-step
    performance improvement condition, illustrating that ASPC
    sustains stable policy enhancement over the course of training.
    }
    \label{fig:case-study}
\end{figure}


\section{Hyperparameter Sensitivity Analyses}
\label{appendix:hyperparam-sensitivity}
\subsection{Sensitivity to the Initial Value of $\alpha$}
\label{appendix:alpha-init}
\begin{figure}[htbp]
    \centering
    \includegraphics[width=\linewidth]{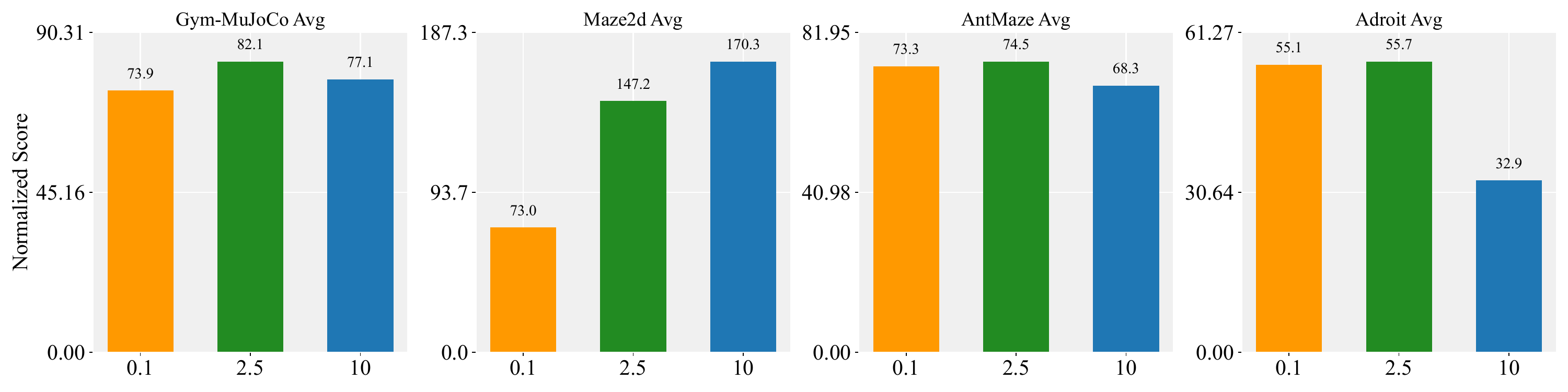}
    \caption{
    Sensitivity of ASPC to the initial value of $\alpha$.
    We compare three initializations ($\alpha_0 = 0.1, 2.5, 10$) and report
    the domain-wise normalized averages.
    }
    \label{fig:alpha-init}
\end{figure}
Figure~\ref{fig:alpha-init} illustrates the influence of the initial value of $\alpha$ on ASPC. Across Gym-MuJoCo, AntMaze, and Adroit, the intermediate setting $\alpha_0 = 2.5$ provides the strongest overall performance, while a very small initialization ($\alpha_0 = 0.1$) tends to bias the early update dynamics too strongly toward BC, limiting the contribution of the RL term.
Conversely, an excessively large initialization (e.g., $\alpha_0 = 10$) can overemphasize the RL component at the beginning, which weakens the intended stabilizing effect of the BC objective and leads to performance drops, particularly on Adroit.  
These observations indicate that a balanced initialization is important for achieving stable  optimization.

\subsection{Sensitivity to the Learning Rate of $\alpha$}
\label{appendix:alpha-lr}
We study the effect of the learning rate used for updating $\alpha$.
The results show that different domains prefer different learning rate magnitudes.
Too small values slow down the adjustment of the RL--BC trade-off, while too large values
make the meta-update unstable and degrade performance.
\begin{figure}[htbp]
    \centering
    \includegraphics[width=\linewidth]{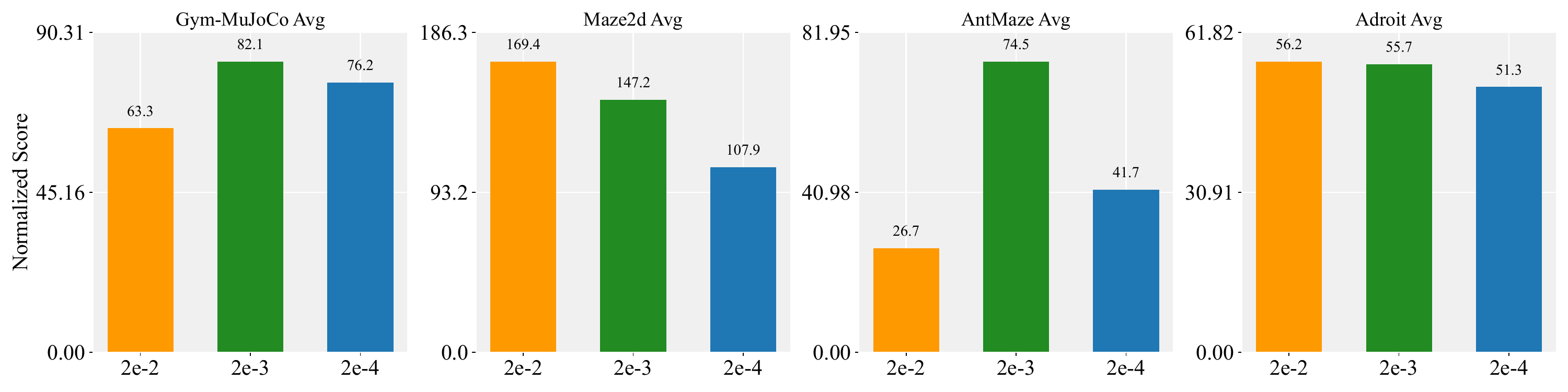}
    \caption{
    Sensitivity analysis on the learning rate of $\alpha$ across all domains.
    Each panel reports the domain-level normalized score under three learning rate settings
    ($2\times10^{-2}$, $2\times10^{-3}$, $2\times10^{-4}$).
    }
    \label{fig:alpha-lr}
\end{figure}

\section{The use of LLM}
Large Language Models (LLMs) were used to aid and polish the writing of this paper. 
In particular, they were applied to rephrase sentences for improved readability and refine grammar and wording to meet academic style requirements. 

\end{document}